\def\1{\bm{1}}
\DeclareMathAlphabet{\mathsfit}{\encodingdefault}{\sfdefault}{m}{sl}
\SetMathAlphabet{\mathsfit}{bold}{\encodingdefault}{\sfdefault}{bx}{n}
\newcommand{\Ls}{\mathcal{L}}
\newcommand{\R}{\mathbb{R}}
\DeclareMathOperator*{\argmin}{arg\,min}
\DeclareMathOperator{\sign}{sign}
\newtheorem{theorem}{Theorem}
\newtheorem{proposition}[theorem]{Proposition}
\newtheorem{corollary}[theorem]{Corollary}
\newtheorem{fact}[theorem]{Fact}
\theoremstyle{plain}
\icmltitlerunning{ProxSparse: Regularized Learning of Semi-Structured Sparsity Masks for Pretrained LLMs}
\begin{document}

\twocolumn[
\icmltitle{ProxSparse: Regularized Learning of Semi-Structured Sparsity
\\ Masks for Pretrained LLMs}



\icmlsetsymbol{equal}{*}

\begin{icmlauthorlist}
\icmlauthor{Hongyi Liu}{rice,equal}
\icmlauthor{Rajarshi Saha}{aws}
\icmlauthor{Zhen Jia}{aws}
\icmlauthor{Youngsuk Park}{aws}
\icmlauthor{Jiaji Huang}{aws}
\icmlauthor{Shoham Sabach}{aws,technion}
\icmlauthor{Yu-Xiang Wang}{aws,ucsd}
\icmlauthor{George Karypis}{aws}

\end{icmlauthorlist}

\icmlaffiliation{rice}{Rice University}
\icmlaffiliation{technion}{Technion}
\icmlaffiliation{ucsd}{UCSD}
\icmlaffiliation{aws}{Amazon Web Service}

\icmlcorrespondingauthor{Hongyi L.}{hongyi.liu@rice.edu}
\icmlcorrespondingauthor{Rajarshi S.}{sahrajar@amazon.com}
\icmlcorrespondingauthor{Yu-Xiang W.}{yuxiangw@ucsd.edu}

\icmlkeywords{Machine Learning, ICML}

\vskip 0.3in
]



\printAffiliationsAndNotice{\icmlEqualContribution} 

\begin{abstract}
Large Language Models (LLMs) have demonstrated exceptional performance in natural language processing tasks, yet their massive size makes serving them inefficient and costly. Semi-structured pruning has emerged as an effective method for model acceleration, but existing approaches are suboptimal because they focus on local, layer-wise optimizations using heuristic rules, failing to leverage global feedback. We present \textbf{ProxSparse}, a learning-based framework for mask selection enabled by regularized optimization. ProxSparse transforms the rigid, non-differentiable mask selection process into a smoother optimization procedure, allowing gradual mask exploration with flexibility. ProxSparse does not involve additional weight updates once the mask is determined. Our extensive evaluations on 7 widely used models show that ProxSparse consistently outperforms previously proposed semi-structured mask selection methods with significant improvement, demonstrating the effectiveness of our learned approach towards semi-structured pruning. Code available \href{https://github.com/amazon-science/ProxSparse}{here}.
\end{abstract}
\vspace{-2em}
\section{Introduction}

Large Language Models (LLMs) have demonstrated strong performance across a wide range of natural language processing (NLP) tasks~\cite{achiam2023gpt, wei2022emergent}. However, deploying and serving LLMs is not cost-efficient due to their massive size with billions of parameters~\cite{frantar2023sparsegpt, sui2025stop}. To address the high computational demands and improve accessibility, various techniques have been proposed to make LLMs more efficient, such as model compression~\cite{han2015deep, frantar2022gptq}. By reducing memory footprint and accelerating computation, model compression significantly improves the feasibility and cost-effectiveness of deploying LLMs at scale~\cite{yuan2024kv, lin2024awq, tseng2025training, ozkara2025stochastic, wei2025roste}.

Network pruning is commonly used to reduce model size and lower computation cost by removing unimportant parameters~\cite{bai2024sparsellmglobalpruningpretrained}. Among various pruning patterns, semi-structured pruning~\cite{mishra2021accelerating}, or block-wise N:M sparsification, has emerged as a practical and effective approach for LLM compression~\cite{sun2023simple, fang2024maskllm}.
In this approach, only N non-zero elements are retained out of M consecutive elements within each parameter block. This semi-structured sparsity strikes a balance between model accuracy and hardware efficiency, and is well-supported by many hardware accelerators~\cite{mishra2021accelerating}, enabling efficient LLM serving.

Despite its advantages, finding an effective semi-structured mask for LLMs remains challenging. Pruning must follow per-block structural restriction, making efforts on other patterns hard to adopt. Additionally, extensive retraining after pruning is impractical due to LLMs' massive size~\cite{ma2023llm, chuang2024learningcompresspromptnatural}. Recent advances like Wanda~\cite{sun2023simple} and SparseGPT~\cite{frantar2023sparsegpt} improved semi-structured pruning using minimal resources with only hundreds of calibration samples, but still struggle to maintain optimal performance after pruning. 
We identify two main challenges in finding effective semi-structured masks: \textbf{1.} The heuristic rules used for mask selection cannot fully take advantage of the calibration dataset during pruning. Methods like SparseGPT and Wanda rely on the Hessian matrix and importance scores to select elements to prune, but these lightweight criteria fail to effectively leverage or learn from the calibration data. \textbf{2.} Both methods focus on solving a ``local'' optimization problem associated with individual layer, without considering the broader, end-to-end optimization across the entire model. In those methods, pruning is based on localized information within each layer, without considering the connections across layers. Thus they cannot benefit from the global feedback, limiting the effectiveness of the pruning method.

We advocate a learning based solution for semi-structured mask selection that incorporates global feedback. We propose \textbf{ProxSparse}, which learns to discover semi-structured masks through an end-to-end optimization process, rather than solely relying on local, heuristic-based decisions. 
ProxSparse enables a finetuning-like procedure that learns the mask through only hundreds of calibration datasets with low resource utilization.
The core of ProxSparse is the mask selection regularizer applied during learning, which transforms the rigid, non-differentiable mask-selection problem into a gradual search process. ProxSparse progressively enforces semi-structured sparsity and frozen weight constraints during training, and gradually shrinks unimportant weights to be pruned. ProxSparse does not involve additional weight updates after determining the mask. 
One challenge in regularized learning is the efficiency of the solver, as a slow solver makes end-to-end learning on LLMs impractical. To address this, we developed a fast solver using iterative soft-thresholding, enabling efficient end-to-end learning at LLM scale.

To comprehensively evaluate our method, we conducted extensive experiments on 7 widely used high-performance open-source models from four model families including Mistral~\cite{jiang2023mistral}, Qwen~\cite{yang2024qwen2}, OpenLlama~\cite{openlm2023openllama} and Llama~\cite{touvron2023llama} family. The benchmarks cover language modeling and seven widely used natural language reasoning tasks. The results show that our regularized learning significantly outperforms baselines consistently accross all evaluated models, producing more effective pruning masks. Our contributions are summarized as follows:

\begin{itemize}
\vspace{-1em}
\item We propose to apply mask selection regularizer for end-to-end learning of semi-structured masks in LLMs. It allows gradual mask discovery with gradient feedback, enabling global optimization with flexibility, which leads to substantial improvements.

\vspace{-0.5em}

\item We developed an efficient proximal gradient descent solver for the semi-structured sparsity regularizer. This method is 10x faster than gradient descent-based solvers and 100x faster than Interior Point Method (IPM) solvers, enabling end-to-end regularized learning at LLMs scale efficiently.

\vspace{-0.5em}

\item Across all tested models, ProxSparse consistently improved perplexity (PPL) and accuracy on 7 common-sense reasoning tasks. It outperforms the previous SOTA pruning baselines at the same scale by up to 35\% in PPL and 20\% in zero-shot tasks, highlighting its effectiveness.

\end{itemize}
\vspace{-1.5em}
\section{Preliminaries and Problem Setup}
\vspace{-0.5em}
\subsection{Large Language Model pruning}

The massive size of LLMs has drawn attention to model compression to reduce serving overhead. Network pruning effectively removes redundant parameters, improving efficiency. In LLMs, pruning has proven effective~\cite{bai2024sparsellmglobalpruningpretrained, frantar2023sparsegpt, huang2024pruning}, and can be categorized into three classes based on granularity.

Structured pruning~\cite{ma2023llm, xia2023sheared} removes entire substructures like neurons or attention heads, reducing computation without extra overhead. However, its rigidity and lack of flexibility often lead to significant performance loss, requiring additional retraining to recover accuracy~\cite{ma2023llm, xia2023sheared}.
Unstructured pruning~\cite{frankle2018lottery} effectively preserves model accuracy by selectively removing unimportant weights in a fine-grained, non-uniform manner. However, its irregular pruning pattern is hardware-unfriendly, causing inefficient memory access.
Semi-structured (block-wise N:M) sparsity~\cite{mishra2021accelerating} balances accuracy and efficiency by retaining N non-zero elements per M-sized block. Such patterns can be effectively leveraged by commercial hardwares for real speedup~\cite{fang2024maskllm, sun2023simple, mishra2021accelerating}, while maintaining flexibility to remove unimportant weights. This work focuses semi-structured pruning for LLMs, introducing an end-to-end regularized learning framework towards optimal mask selection.

\subsection{Semi-Structured masks Selection for LLMs}

Previous research has explored various mask-finding techniques for LLMs, with many showing success in semi-structured pruning. Here, we review the most advanced methods for semi-structured mask selection.

Magnitude pruning~\cite{han2015deep} is a standard technique that removes individual weights based on their magnitudes with certain thresholds. Wanda~\cite{sun2023simple} also avoids retraining or updating weights and introduces activation-aware pruning. The importance of each weight is evaluated using a per-output pruning criterion, where the weight magnitude is multiplied by its corresponding input activation using calibration data. SparseGPT~\cite{frantar2023sparsegpt} leverages the Hessian matrix to calculate the weight importance and reconstruction errors with the calibration data. These pruning methods typically solve a local optimization problems, providing efficient and low-resource compression techniques~\cite{ma2023llm, frantar2023sparsegpt, frantar2022gptq, sun2023simple}.

On the other hand, learning-based solutions for pruning have been explored in previous works, particularly in vision tasks. The main challenge is the non-differentiable nature of mask selection, and techniques like Straight-Through Estimators (STE)~\cite{bengio2013estimating} have been proposed to overcome this. However, these methods typically require large-scale retraining, which is difficult for LLMs due to their enormous size. In our work, we propose to use the mask selection regularizer and efficiently identify the optimal mask in a learned manner with only hundreds of calibration samples without extensive retraining. A recently proposed learning-based method, MaskLLM~\cite{fang2024maskllm}, introduces a large-scale learning-based approach ($\sim$100,000 samples) to learn pruning masks using Gumbel Softmax sampling. Our approach employs a different design and operates with $\sim$1000x smaller sample size ($\sim$100 samples). We consider MaskLLM complementary to our approach, as it focuses on the regime that learns with large-scale data samples. We provide more comparison and discussion in Sec.~\ref{sec:calib}.

\vspace{-0.5em}

\subsection{Problem setup}
Let $W_0 \in\R^d$ be the pre-trained weights of the model and $\Ls(W)$ be the (population) loss function for the model with weight $W$. We say a $W\in \R^d$ is $2:4$-sparse if for every block of 4 parameters in $W$
only 2 are non-zero.
 
Our goal is to solve the pruning problem by finding an appropriate semi-structured sparse masks while keeping the weights of the pretrained model frozen. 

We may express our task using the following stochastic optimization problem:
\begin{equation}\label{eq:orig_problem}
\begin{aligned}
\min_{M} &\quad \mathcal{L}(W_0 \odot M), \\
\text{s.t.} &\quad M \in \{0, 1\}^{d},  M \text{ is 2:4 sparse}, 
\end{aligned}
\end{equation}
where $\Ls$ denotes the loss, mask $M\in\{0,1\}^d$ denotes a Boolean-valued with the same shape as the frozen model weights $W_{0}$, and $\odot$ denotes element-wise multiplication.
The problem is hard to solve because $\Ls$ is non-convex and the constraints are combinatorial. Moreover, we do not have access to $\Ls$ directly (since it's the \emph{expected} loss). Instead, we have a small calibration dataset that we can stream through that gives us \emph{stochastic} first-order (gradient) access, if we assume they are new data points drawn from the test-data distribution. 

Given these constraints, our goal is not to \emph{solve} \eqref{eq:orig_problem}, but rather to find efficient heuristics that work in practice. In Section~\ref{sec:method}, we propose our approach and highlight the interesting aspect of it. In Section~\ref{sec:evaluation}, we thoroughly evaluate our method in semi-structured sparse pruning in a number of open-source LLM models.
%


\section{Methodology}
\label{sec:method}

We introduce ProxSparse, a learning-based pruning method guided by a mask selection regularizer that generates high-quality semi-structured masks for efficient LLM serving. ProxSparse enables mask exploration in a global perspective by leveraging the gradient-based method, taking into account cross-layer connections with end-to-end feedback, rather than relying on localized, heuristic-based approaches for abrupt pruning. In this work, we focus specifically on 2:4 sparsity, 
and we discuss the extension to other sparsity patterns in Appendix~\ref{Discussion}.

To address the challenges posed by the non-convex and non-differentiable nature of \eqref{eq:orig_problem}, our strategy for solving \eqref{eq:orig_problem} involves (a) designing a relaxation of the problem with hard constraints into a (Lagrange) regularized form (b) developing a principled optimization algorithm for solving the relaxed problem, thereby facilitating the learning process.

\subsection{Relaxation and Structure-inducing regularization}
We start by rewriting \eqref{eq:orig_problem} into an equivalent form:
\begin{subequations}
\begin{align}
\min_{W}&\quad \mathcal{L}(W),\nonumber\\
\text{s.t.}& \quad W \text{ is 2:4 sparse}, \label{eq:sparse} \\
&\quad \text{Mask}_{W}\odot(W - W_0) = 0 \label{eq:frozen}, 
\end{align}
\end{subequations}

where $\text{Mask}_{W}$ selects the non-zero elements of $W$, $W_{0}$ denotes the original pretrained parameter weights.

This seemingly trivial reformulation changes the variables to optimize from a Boolean mask to a continuous weight vector which makes it more amenable to continuous optimization. 

Next, we propose a relaxation of the two constraints \eqref{eq:sparse} and \eqref{eq:frozen} into a regularized form that gradually induces these structures:
\begin{equation}\label{eq:reg}
\begin{aligned}
\min_{W}& \quad \mathcal{L}(W) + \lambda_1 \text{Reg}_{2:4}(W) + \lambda_2 \text{Reg}_{W_0}(W),
\end{aligned}
\end{equation}
where $\text{Reg}_{2:4}(W)$ promotes the structured sparsity constraints and $\text{Reg}_{W_0}(W)$ penalizes the deviation away from the initial pretrained weight $W_0$.

$\text{Reg}_{2:4}$ decomposes into every 4-parameter block, where we apply the following regularizer \cite{kubler2025proximal} to enforce the sparse pattern.
{\small
\begin{equation}\label{eq_semi}
\begin{aligned}
\text{Reg}_{2:4, \, w \in \mathbb{R}^4}(w) =  & |w_1||w_2||w_3| + |w_2||w_3||w_4| \\
  + & |w_3||w_4||w_1| + |w_4||w_1||w_2|.
\end{aligned}
\end{equation}
}
\begin{proposition}\label{prop:24}
The following statements hold true.
\begin{enumerate}
    \item $\text{Reg}_{2:4, \, w \in \mathbb{R}^4}(w) = 0 $ if and only if $w$ is 2:4 sparse.
    \item $\text{Reg}_{2:4, \, w \in \mathbb{R}^4}(w)$ is invariant to permutation of the coordinates.
    \item $\text{Reg}_{2:4, \, w \in \mathbb{R}^4}(w)$ is differentiable when restricting to the ``active set'' $\{i\in[4]||w_i|>0\}$.
\end{enumerate}
\end{proposition}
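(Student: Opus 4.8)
The plan is to treat $\text{Reg}_{2:4}$ as the third elementary symmetric polynomial $e_3$ evaluated at $(|w_1|,|w_2|,|w_3|,|w_4|)$: the four summands in \eqref{eq_semi} are exactly the $\binom{4}{3}=4$ products of three distinct entries of $|w|$. This single observation drives all three parts, so I would state it up front and then dispatch the claims in order.

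For part 1, I would note that each summand is a product of absolute values, hence nonnegative, so $\text{Reg}_{2:4}(w)=0$ iff every summand vanishes, and a summand $|w_i||w_j||w_k|$ vanishes iff at least one of $w_i,w_j,w_k$ is zero. Letting $S=\{i\in[4]:w_i\neq 0\}$ denote the support of $w$, the condition ``every $3$-element subset of $[4]$ contains an index outside $S$'' is equivalent to ``$S$ contains no $3$-element subset,'' i.e. $|S|\le 2$ — precisely that $w$ is $2{:}4$ sparse. (If one reads $2{:}4$ sparsity as \emph{exactly} two nonzeros rather than at most two, the same computation shows $\text{Reg}_{2:4}(w)=0\iff|S|\le 2$; this matches the usual ``at most $N$ out of $M$'' convention, and I would add a one-line remark to that effect.) For part 2, permutation invariance is immediate from the symmetric-polynomial viewpoint: any permutation $\pi$ of the coordinates maps the multiset $\{|w_1|,\dots,|w_4|\}$ to itself and hence leaves $e_3$ unchanged; equivalently, $\pi$ merely permutes the four summands of \eqref{eq_semi} among themselves, and I would exhibit that bijection explicitly to keep the argument elementary.

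For part 3, I would fix $w$, set $A=\{i\in[4]:|w_i|>0\}$, and restrict $\text{Reg}_{2:4}$ to the coordinate subspace $\{v\in\R^4 : v_i=0\ \text{for all } i\notin A\}$. On this subspace every summand of \eqref{eq_semi} that involves an index outside $A$ is identically zero, so there $\text{Reg}_{2:4}$ agrees with the sum of the summands whose three indices all lie in $A$; each surviving summand is a product of the functions $|v_i|$, $i\in A$, and each such $|v_i|$ is differentiable (indeed smooth) on a neighborhood of $w$ because $w_i\neq 0$ there, with derivative $\sign(w_i)$. The product rule then yields differentiability of the restriction, and one can record the gradient in closed form if desired.

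All three arguments are routine; the only points that demand care are (i) getting the support/subset bookkeeping in part 1 exactly right and reconciling it with whatever $2{:}4$ convention the paper adopts, and (ii) stating precisely in part 3 what ``restricting to the active set'' means, since $\text{Reg}_{2:4}$ is manifestly \emph{not} differentiable as a function on all of $\R^4$ near a point with a vanishing coordinate. I do not expect a genuine obstacle here, so the whole proposition should be a short verification rather than a substantive proof.
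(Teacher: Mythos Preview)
Your proposal is correct and follows essentially the same approach as the paper's proof: the paper argues part~1 by the same subset/pigeonhole bookkeeping, dismisses part~2 ``by symmetry,'' and justifies part~3 by noting the regularizer is a cubic polynomial in the strict interior of an orthant. Your framing via the elementary symmetric polynomial $e_3$ and your more explicit handling of the active-set restriction are cleaner and more careful than the paper's terse treatment, but the underlying logic is identical.
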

Observe that by the first property, if $\lambda_1 \rightarrow \infty$ the solution is guaranteed to be \eqref{eq:sparse}. The non-smoothness of \eqref{eq_semi} ensures that it enjoys a ``shrinkage'' property (analogous to $\ell_1$-regularization for sparsity) such that it induces \emph{exact} 2:4-sparsity even if $\lambda$ is not tending to $\infty$.

To promote the locality constraint \eqref{eq:frozen}, we design the second regularizer as follows.
$$\text{Reg}_{W_0}(W) = \left\|\frac{W}{W_0 + \epsilon \sign(W_0)} \odot (W - W_0)\right\|_F^2,$$
where the division is coordinate-wise and $\sign(\cdot)$ outputs $1$ when $\cdot \geq 0$ and $0$ otherwise.

This regularizer can be viewed as a special weight decay towards $W_0$, but it imposes a stronger penalty for coordinates of $W$ that are larger and nearly no penalty for those coordinates that are nearly $0$. $\epsilon \sign(W_0)$ is added to avoid the numerical instability associated with (near)-$0$ division. 
\begin{proposition}\label{prop:locality}
    \begin{enumerate}
        \item $\text{Reg}_{W_0}(W) = 0$ if and only if $[W]_i=[W_0]_i$ for all coordinates $i$ s.t. $W_i\neq 0$.
        \item  $\text{Reg}_{W_0}(W) = \text{Reg}_{W_0[W\neq 0]}(W[W\neq 0])$.
        \item $\text{Reg}_{W_0}(W)$ is continuously differentiable. 
    \end{enumerate}
\end{proposition}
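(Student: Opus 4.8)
For Proposition~\ref{prop:locality}, the plan is to collapse the whole statement to a coordinate-wise fact about a separable sum of nonnegative polynomials. Write $D_i := [W_0]_i + \epsilon\,\sign([W_0]_i)$ for the $i$-th entry of the denominator vector. The first thing I would do is verify that $D_i \neq 0$ for every $i$, so that the coordinate-wise division in the definition is well-posed everywhere: if $[W_0]_i \geq 0$ then $\sign([W_0]_i) = 1$ and $D_i = [W_0]_i + \epsilon > 0$, while if $[W_0]_i < 0$ then $\sign([W_0]_i) = 0$ and $D_i = [W_0]_i < 0$. With this in hand, expanding the squared Frobenius norm entrywise gives the closed form
\[
\text{Reg}_{W_0}(W) \;=\; \sum_{i=1}^{d} \frac{[W]_i^{2}\,\bigl([W]_i - [W_0]_i\bigr)^{2}}{D_i^{2}},
\]
in which, for fixed $W_0$, each summand is a nonnegative degree-$4$ polynomial in the single variable $[W]_i$. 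All three claims then drop out of this representation.

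For claim~1, a finite sum of nonnegative reals is zero if and only if every term is zero, i.e.\ if and only if for each $i$ we have $[W]_i = 0$ or $[W]_i = [W_0]_i$; this is precisely the assertion that $[W]_i = [W_0]_i$ for all $i$ with $[W]_i \neq 0$. For claim~2, I would note that indices with $[W]_i = 0$ contribute nothing to the sum, so it equals $\sum_{i:\,[W]_i \neq 0} [W]_i^{2}([W]_i - [W_0]_i)^{2}/D_i^{2}$; since $D_i$ depends only on $[W_0]_i$, which is carried over unchanged when one restricts $W$ and $W_0$ to the support of $W$, the surviving sum is by definition $\text{Reg}_{W_0[W\neq 0]}(W[W\neq 0])$. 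For claim~3, the closed form above exhibits $\text{Reg}_{W_0}$ as a polynomial in the coordinates of $W$ divided by nonzero constants, hence $C^\infty$ and in particular continuously differentiable; I would also record the partial derivatives $\partial\,\text{Reg}_{W_0}(W)/\partial [W]_i = 2[W]_i([W]_i - [W_0]_i)(2[W]_i - [W_0]_i)/D_i^{2}$, since these are what the proximal/gradient solver actually consumes.

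I do not expect a genuine obstacle here: the content is elementary once the functional is written out coordinate by coordinate. The only two places that warrant an explicit line of care are the short sign case-split showing $D_i \neq 0$ (so that $\text{Reg}_{W_0}$ is defined and smooth on all of $\R^d$, not merely off some exceptional set), and fixing the sub-vector notation $W[W\neq 0]$ and $W_0[W\neq 0]$ in claim~2 precisely enough that ``restrict to the support of $W$'' is unambiguous; everything else is the algebra of a separable sum of nonnegative polynomials.
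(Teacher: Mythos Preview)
Your proposal is correct and takes essentially the same approach as the paper: reduce to a separable coordinate-wise sum of nonnegative quartics and read off all three claims. Your version is in fact more careful than the paper's brief sketch---you explicitly handle the sign case-split showing $D_i\neq 0$ for all $i$ (the paper only treats $w_0>0$ ``w.l.o.g.'') and you record the partial derivatives, which the paper omits.
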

Together with Proposition~\ref{prop:24}, we observe that the nullspace of the two regularizers together is the feasible region of the original problem, which allows us to optimize towards a solution that satisfies the original problem's constraints.
\begin{corollary}
$\text{Reg}_{W_0}(W) = 0$ and $\text{Reg}_{2:4}(W) = 0$ if and only if $W$ satisfies \eqref{eq:sparse} and \eqref{eq:frozen}. 
\end{corollary}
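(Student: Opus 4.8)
The plan is to decouple the two equations and invoke the first parts of Propositions~\ref{prop:24} and~\ref{prop:locality}, after first recording the elementary fact that each regularizer is a sum of nonnegative terms.

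First I would handle $\text{Reg}_{2:4}(W) = 0$. By construction $\text{Reg}_{2:4}(W) = \sum_{b} \text{Reg}_{2:4,\, w\in\R^4}(W_b)$, where the sum ranges over the consecutive $4$-parameter blocks $W_b$ of $W$. Each summand is itself a sum of products of absolute values, hence nonnegative, so the total vanishes if and only if every block term vanishes. By Proposition~\ref{prop:24}(1), $\text{Reg}_{2:4,\, w\in\R^4}(W_b) = 0$ iff $W_b$ is $2{:}4$ sparse. Quantifying over all blocks, $\text{Reg}_{2:4}(W) = 0$ iff every block of $W$ has at most two nonzero entries, which is precisely constraint~\eqref{eq:sparse}.

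Next I would handle $\text{Reg}_{W_0}(W) = 0$, which is essentially immediate from Proposition~\ref{prop:locality}(1): that statement gives $\text{Reg}_{W_0}(W) = 0$ iff $[W]_i = [W_0]_i$ for every coordinate $i$ with $W_i \neq 0$. It remains only to observe that this condition restates~\eqref{eq:frozen}: since $\text{Mask}_W$ is the indicator of the support of $W$, the $i$-th entry of $\text{Mask}_W \odot (W - W_0)$ equals $(W - W_0)_i$ when $W_i\neq 0$ and equals $0$ otherwise, so $\text{Mask}_W \odot (W - W_0) = 0$ holds iff $(W - W_0)_i = 0$ on the support of $W$. Combining the two equivalences termwise, the conjunction $\text{Reg}_{W_0}(W) = 0$ and $\text{Reg}_{2:4}(W) = 0$ holds iff $W$ satisfies both~\eqref{eq:sparse} and~\eqref{eq:frozen}.

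I do not anticipate a genuine obstacle: the corollary is a direct splicing of the two propositions. The only steps needing an explicit (if trivial) word are the nonnegativity-plus-blockwise-decomposition reduction for $\text{Reg}_{2:4}$, and the bookkeeping that identifies the support condition of Proposition~\ref{prop:locality}(1) with the masked equation~\eqref{eq:frozen}. One mild subtlety worth stating is the convention for $\text{Mask}_W$ on coordinates where $W_i = 0$; as long as it is taken to be $0$ there (consistent with ``selects the non-zero elements of $W$''), the identification with~\eqref{eq:frozen} is exact and no edge case arises.
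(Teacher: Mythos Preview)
Your proposal is correct and follows the same approach as the paper, which treats the corollary as an immediate observation from Propositions~\ref{prop:24}(1) and~\ref{prop:locality}(1) without writing out a formal proof. Your added bookkeeping (blockwise nonnegativity for $\text{Reg}_{2:4}$ and the identification of the support condition with~\eqref{eq:frozen}) is appropriate and fills in exactly the small gaps the paper leaves implicit.
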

To say it differently, if $\lambda_1,\lambda_2\rightarrow \infty$, the relaxed problem \eqref{eq:reg} is identical to the original problem \eqref{eq:orig_problem}.
We encode the rigid and non-differentiable mask selection constraints into the learning objectives, enabling a learnable optimization process.
Another benefit of transitioning from hard constraints to soft  regularization is that it introduces ``wiggling room'', enabling flexibility during exploration. This allows the learning to make smoother, more informed pruning decisions with a larger exploration space, rather than making abrupt changes during optimization, which could cause early commitment to suboptimal state as we will show in experimental Section~\ref{sec:lambda1} later. The main challenge now lies in an effective solving algorithm for the regularizer with efficiency, which is crucial to facilitate end-to-end mask learning for LLMs with scale.

\subsection{Proximal Gradient Descent for 2:4 Sparsity}

To optimize \eqref{eq:reg},we propose to use the proximal gradient descent \citep{nesterov2013gradient} --- a popular method for solving composite optimization problems of the form $\min_x f(x) + h(x)$ where $f$ is differentiable but $h$ is not.

Proximal gradient descent iteratively updates $x$ by alternating between a gradient descent step on $f$ and a proximal operator (a generalization of ``projection'') on $h$:
\begin{subequations}
\begin{align}
    y &= x_t - \eta \nabla f(x_t), \\
    x_{t+1} &= \argmin_{x} \frac{1}{2}\|x - y\|^2 + h(x).
\end{align}
\end{subequations}
In our problem, $f:=\Ls + \lambda_2\text{Reg}_{W_0}$ and $h:=\lambda_1\text{Reg}_{2:4}$. Pseudocode of this algorithm is given in Algorithm~\ref{alg:ProxSparse}.
\begin{algorithm}[!t]
\caption{\textsf{ProxSparse}: Proximal Gradient Descent for End-to-End 2:4-Sparsity Pruning}\label{alg:ProxSparse}
    \begin{algorithmic}[1]
        \State \textbf{Input:} Initial pretrained weights $w_0$. Learning rate schedule $\eta_0,\eta_1,...$. Stochastic gradient oracle $\mathcal{G}$ that takes $w$ and outputs $g$ such that $\mathbb{E}[g] = \nabla \mathcal{L}(w)$.
        \For{$k=0,1,2,...$ }
        \State $g_k \leftarrow  \mathcal{G}(w_{k})$ \Comment{SGD (or Adam) update.}
        \State $V \leftarrow W_{k} -\eta_k (g_k + \lambda_2\nabla \text{Reg}_{W_0}(W_k))$
        \State $W_{k+1} \leftarrow \argmin_{W} \frac{1}{2}\|W - V\|^2 + \lambda_1\text{Reg}_{2:4}(W).$    
        \EndFor
        \State \textbf{Output:} $W_0 \odot \mathrm{Mask}_{ \mathrm{Proj}_{2:4}(W_k)}$.
    \end{algorithmic}
\end{algorithm}

The main benefit of the proximal gradient approach is that it does not prematurely commit to a particular sparsity mask, or fix the weights at the initialization. Instead, the regularizers are soft constraints, allowing ample wiggling room around the rigid constraint set for the gradient descent-based algorithm to potentially jump out of a suboptimal local region, and thereby converge to a better qualifying solution. 

One issue of not imposing the constraint is that the last iterate might not be feasible after the specified number of iterations. For those cases, we simply project the solution $W_k$ to a 2:4-sparse solution basing on magnitude and snap the surviving weights to $W_0$.
All our experimental results are based on solutions that are exactly 2:4 sparse with weights unchanged from initialization. 

\subsection{Efficient Proximal Operator}
\label{sec_effi_proximal}
An efficient solver for the proximal operator is essential for enabling end-to-end learning at LLM scale. Since $\Ls$ and $\text{Reg}_{W_0}$ are both differentiable, the efficient implementation of ProxSparse
boils down to solving the proximal operator associated with $\text{Reg}_{2:4}$.
\begin{equation}\label{eq:prox}
\small
    w^* = \argmin_{w \in \mathbb{R}^4} \frac{1}{2} \|w - y\|^2 + \lambda \mathrm{Reg}_{2:4}(w)  
\end{equation}

This is a non-convex optimization problem. \citet{kubler2025proximal} showed that it can be solved with three convex subproblems. 
\begin{theorem}[\cite{kubler2025proximal}]\label{thm:split}
To solve \eqref{eq:prox} for any $y\in\R^4$, it suffices to solve: 
\begin{equation}\label{eq:prox_reformulate}
\small
\begin{aligned}
    \min_{w \in \mathbb{R}_+^4} \frac{1}{2} \|w - z\|^2 + \lambda \mathrm{Reg}_{2:4}(w)
\end{aligned}
\end{equation}
where $z = \mathrm{sorted}(|y|)$ is non-negative and sorted in descending order, i.e., $z_1\geq z_2\geq z_3 \geq z_4\geq 0$. Moreover, the optimal solution to \eqref{eq:prox_reformulate} must be one of the following three candidates:
\begin{enumerate}
    \item ``2-sparse solution'' $[z_1,z_2, 0,0]$; 
    \item ``3-sparse solution'', $[\dot{w}_1,\dot{w}_2,\dot{w}_3,0]$
    \item ``dense solution'' $[\ddot{w}_1,\ddot{w}_2,\ddot{w}_3,\ddot{w}_4]$
\end{enumerate}
where $\dot{w} = \argmin_{w\in\mathbb{R}_+^3} \{g_3(w) \; \text{s.t.}\; \nabla^2 g_3(w) \succeq 0 \}$ with $$g_3(w) :=\frac{1}{2}\|w - z_{1:3}\|^2 + \lambda (w_1w_2 + w_2w_3 + w_3 w_1),$$
and $\ddot{w} = \argmin_{w\in\mathbb{R}_+^4} \{g_4(w) \; \text{s.t.}\; \nabla^2 g_4(w) \succeq 0 \}$ with $g_4(w)$ being the objective function of \eqref{eq:prox_reformulate}. Meanwhile, $\{w | \nabla^2 g_3(w)\succeq 0 \}$ and  $\{ w | \nabla^2 g_4(w)\succeq 0 \}$ are \emph{convex sets}, making the corresponding optimization problems convex.

\end{theorem}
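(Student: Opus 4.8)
The plan is, following \citet{kubler2025proximal}, to prove the two halves of the statement in turn: that the general proximal problem \eqref{eq:prox} reduces to the sorted, non-negative instance \eqref{eq:prox_reformulate}, and that a minimizer of \eqref{eq:prox_reformulate} is necessarily one of the three listed candidates. For the reduction I would use two symmetries of the objective. First, since $\mathrm{Reg}_{2:4}$ depends on $w$ only through $(|w_1|,\dots,|w_4|)$, replacing each coordinate $w_i$ by $|w_i|$ carrying the sign of $y_i$ leaves the regularizer unchanged and does not increase $\tfrac12\|w-y\|^2$; hence some minimizer of \eqref{eq:prox} has $w_i$ and $y_i$ of the same sign for every $i$ (with $w_i=0$ allowed when $y_i=0$), which lets me substitute $|y|$ for $y$ and restrict to $w\in\R_+^4$. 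Second, since $\mathrm{Reg}_{2:4}$ is permutation-invariant (Proposition~\ref{prop:24}(2)) and $\tfrac12\|w-|y|\|^2$ is invariant under a joint permutation of $w$ and $|y|$, applying the permutation that sorts $|y|$ in descending order turns the problem into \eqref{eq:prox_reformulate} with $z=\mathrm{sorted}(|y|)$; a solution of \eqref{eq:prox} is then recovered by undoing that permutation and reattaching the signs of $y$.

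For the second half I would first show that \eqref{eq:prox_reformulate} has a minimizer whose support is a prefix $\{1,\dots,k\}$ of the sorted order, $k\in\{0,1,2,3,4\}$. If some minimizer has $w_i=0$ while $w_j>0$ for indices $i<j$ (so $z_i\ge z_j$), then exchanging the values in coordinates $i$ and $j$ leaves $\mathrm{Reg}_{2:4}$ unchanged and changes $\tfrac12\|w-z\|^2$ by $w_j(z_j-z_i)\le 0$; this never increases the objective while strictly decreasing $\sum_i i\cdot\mathbf{1}[w_i>0]$, so finitely many such exchanges yield a prefix-supported minimizer. If $k\le 2$ the regularizer vanishes (Proposition~\ref{prop:24}(1)) and $\tfrac12\|w-z\|^2$ is minimized coordinatewise, so this minimizer is dominated by the explicit candidate $[z_1,z_2,0,0]$, which is hence itself optimal. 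If $k=3$ (resp.\ $k=4$) the minimizer lies in the relative interior of the face $\{w_4=0\}$ (resp.\ in the interior of $\R_+^4$), hence is an unconstrained critical point of $g_3$ (resp.\ $g_4$), and the second-order necessary conditions for a local minimum force $\nabla^2 g_3\succeq 0$ (resp.\ $\nabla^2 g_4\succeq 0$); so it lies in $\{w:\nabla^2 g_3(w)\succeq 0\}$ (resp.\ $\{w:\nabla^2 g_4(w)\succeq 0\}$).

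It remains to identify that interior critical point with $\dot w$ (resp.\ $\ddot w$) and, along the way, to establish the convexity claim. For $m\in\{3,4\}$, $\nabla^2 g_m(w)$ is an affine function of $w$, so $w\mapsto\lambda_{\min}\!\big(\nabla^2 g_m(w)\big)=\min_{\|v\|=1} v^\top\nabla^2 g_m(w)\,v$ is a pointwise minimum of affine functions, hence concave; therefore its superlevel set $\{w:\nabla^2 g_m(w)\succeq 0\}$ is convex. On this set $g_m$ is convex, its Hessian being positive semidefinite there, so the interior critical point found above, which lies in this convex set, is a global minimizer of $g_m$ over it, i.e.\ equals $\dot w$ (resp.\ $\ddot w$). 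Collecting the cases $k\le 2$, $k=3$, $k=4$ then shows that a minimizer of \eqref{eq:prox_reformulate} is one of the three candidates, and a corresponding minimizer of \eqref{eq:prox} is read off by the inverse sorting and sign map.

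The step I expect to be the main obstacle is the $k\in\{3,4\}$ analysis: one must argue carefully that a global minimizer of the non-convex problem \eqref{eq:prox_reformulate}, once known to have full support on its face, must land inside the convex Hessian-PSD region, so that the non-convex support-restricted subproblem can be replaced by the convex problem defining $\dot w$ and $\ddot w$. Routing this through the second-order optimality conditions, rather than through any global convexity of $g_3$ or $g_4$ (which genuinely fails), together with the concavity-of-$\lambda_{\min}$ argument for convexity of the Hessian-PSD sets, is the delicate part; by comparison the sign-and-rearrangement reduction and the exchange argument for prefix support are routine.
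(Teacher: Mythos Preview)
The paper does not prove Theorem~\ref{thm:split}: it is stated with the attribution \cite{kubler2025proximal}, and no argument for it appears anywhere in the body or in Appendix~\ref{sec:proofs} (which proves only Propositions~\ref{prop:24}, \ref{prop:locality}, \ref{prop:innerloop_convergence}, and \ref{prop:outerloop_convergence}). So there is no in-paper proof to compare your proposal against.

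That said, your reconstruction is sound. The sign-and-permutation reduction to \eqref{eq:prox_reformulate} is correct; the exchange argument does yield a prefix-supported minimizer with the same objective value; and the step you flag as delicate --- that a minimizer with support $\{1,\dots,k\}$ for $k\in\{3,4\}$ is an interior local minimum of $g_k$ on its face, hence satisfies $\nabla g_k=0$ and $\nabla^2 g_k\succeq 0$, and therefore coincides with the constrained minimizer $\dot w$ or $\ddot w$ --- goes through exactly as you describe, because $\nabla^2 g_m$ is affine in $w$, so the PSD region is convex and $g_m$ is convex on it. One minor observation that may simplify your write-up: for $m=3$ the Hessian $\nabla^2 g_3 = I_3 + \lambda(\mathbf{1}\mathbf{1}^\top - I_3)$ is actually constant in $w$, so the set $\{w:\nabla^2 g_3(w)\succeq 0\}$ is either all of $\mathbb{R}^3$ (when $\lambda\le 1$) or empty (when $\lambda>1$); your argument still covers this automatically, since an empty feasible set simply means the $k=3$ case cannot occur.
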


This result suggests that we can simply \emph{enumerate} the three candidate solutions and return the one with the smallest objective value. \citet{kubler2025proximal} thus proposed to solve for the ``3-sparse'' and ``dense'' solutions using interior point method (IPM) with a log-determinant barrier function, leading to the \textsf{EnumIPM} algorithm, which optimally solves \eqref{eq:prox_reformulate}. However, \textsf{EnumIPM} incurs high computational cost (Table~\ref{tab:computation}). A faster heuristic, \textsf{EnumPGD}, was introduced to replace IPM with projected gradient descent without imposing semidefinite constraints. While \textsf{EnumPGD} improves efficiency, it sacrifices provably guarantees.

We propose a new method based on alternating minimization (ALM) with convergence guarantees. The resulting \textsf{EnumALM} is even more efficient than \textsf{EnumPGD} (see Table~\ref{tab:computation} for an numerical comparison). Moreover, in all 20,000 experiments in Table~\ref{tab:computation}, \textsf{EnumALM} provides more optimal solutions than those of \textsf{EnumIPM}. This enables us to scale up the proximal gradient method for handling LLMs with billions of parameters in practice. An example regularization paths is illustrated in Figure~\ref{fig:regularization_path} in Appendix~\ref{app:reg_path}.

Pseudocode for \textsf{ALM} and \textsf{EnumALM} are given in Algorithm~\ref{alg:alm} and \ref{alg:enumALM} respectively. Algorithmically, ALM works by iterating over the coordinates of $w$ and minimizing $g_3$ or $g_4$ over the current coordinate while keeping other coordinates fixed. The solution of this one dimensional problem is soft-thresholding:
\begin{fact}
Assume $z\geq 0$, the optimal solution to $\min_{w\in \R_+}  \frac{1}{2}(w-z)^2 + \alpha w$ is
$w = \max\{z-\alpha,0\}$.
\end{fact}
Observe that soft-thresholding is commonly used in L1-regularized optimization for inducing (unstructured) sparsity.  Our algorithm can thus be viewed as iterative soft-thresholding with adaptive chosen threshold that induces 2:4 structured sparsity rather than standard sparsity.

\begin{algorithm}
\caption{\textsf{ALM}: Alternating Minimization}\label{alg:alm}
\begin{algorithmic}[1]
\State \textbf{Input:} $z\in \R^4$ (sorted, nonnegative), parameter $\lambda$, tolerance $\epsilon$, desired sparsity-level $S=3 \text{ or } 4$.
\State Initialize $w'=0, w=0$, 
\State $w_{1:S}  \leftarrow z_{1:S}$
    \While{ $\|w'-w\|>\epsilon$}
        \For{$i \in \{1,...,S\}$}
            \State 
            $w_i \leftarrow \max \left\{ z_i - \lambda \sum_{\substack{j,k\in[4]\backslash \{i\}\\j \neq k}} w_jw_k, 0 \right\}$ \\
            \Comment{This is soft-thresholding operator}
        \EndFor
        \State $w' \leftarrow w$
    \EndWhile
\State  \textbf{Output:} $w$
\end{algorithmic}
\end{algorithm}
\begin{algorithm}
\caption{\textsf{EnumALM} for solving \eqref{eq:prox}}\label{alg:enumALM}
\begin{algorithmic}[1]
    \State \textbf{Input:} $y\in \R^4$, parameter $\lambda$, tolerance $\epsilon$
    \State $s \leftarrow \mathrm{sign}(y) $ \Comment{elementwise}
    \State $z, \textrm{idx} \leftarrow  \mathrm{sort}(|y|,\text{`descending'})$  \Comment{idx is reverse index.}
    \State $\tilde{w} \leftarrow [z_1,z_2, 0, 0]$ \Comment{2-sparse solution.}
      \State $\dot{w} = \textsf{ALM}(z,\lambda,\epsilon, S=3)$ \Comment{3-sparse solution.}
      \State $\ddot{w} = \textsf{ALM}(z,\lambda,\epsilon, S=4)$ \Comment{dense solution.}
      \State $w \leftarrow \argmin_{w\in\{\tilde{w},\dot{w},\ddot{w}\}}\frac{1}{2}\|w-z\|^2 + \lambda \mathrm{Reg}_{2:4}(w)$
    \State \textbf{Output:}  $s\odot w[\textrm{idx}]$ \Comment{$\odot$ is elementwise product} 
    \end{algorithmic}
\end{algorithm}

\begin{table}[!t]
    \centering
    \resizebox{\linewidth}{!}{  
    \begin{tabular}{cc c c}
     \hline
                & \textsf{EnumIPM} &\textsf{EnumPGD} & \textsf{EnumALM} (ours)\\
                \hline
      Total runtime (sec)   & 561.70 & 43.31& 8.52\\
      Max suboptimality  &$10^{-13}$&$10^{-6}$& $<10^{-13}$\\
       \hline
    \end{tabular}
    }
    \caption{Comparison of the runtime and accuracy of solvers of \eqref{eq:prox} for solving 100 randomly generated problem instances, each with 200 different choices of $\lambda$. The second row shows the worst-case suboptimality. IPM is guaranteed to give the optimal solution up-to a tolerance parameter of $10^{-13}$. ALM achieves better objective value in all experiments than IPM, while GD occasionally gives solutions with slightly suboptimal objective values.}
    \label{tab:computation}
\end{table}

\subsection{Convergence guarantees}

Next, we study the convergence theory of ProxSparse. We first prove that the inner-loop Algorithm~\ref{alg:alm} always converges to a critical point. Then we will argue that if Algorithm~\ref{alg:enumALM} returns the correct solution (they do in all our experiments!), then under mild assumptions on training loss $\mathcal{L}$ and boundedness of the parameters $W_k$, the outer-loop Algorithm~\ref{alg:ProxSparse} also converges to a stationary point. The proofs of both propositions below are deferred to Appendix~\ref{sec:proofs}.

\begin{proposition}[Convergence of \textsf{ALM}]\label{prop:innerloop_convergence}
When $\epsilon>0$, Algorithm~\ref{alg:alm} halts with no more than $3\lambda \|z\|^3/\epsilon^2$ iterations. Also, at the limit $\epsilon\rightarrow 0$, the output of Algorithm~\ref{alg:alm} converges to a critical point of $g_3$ when $S=3$ (or of $g_4$ when $S=4$).
\end{proposition}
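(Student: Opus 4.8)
The plan is to treat Algorithm~\ref{alg:alm} as a block-coordinate descent (exact minimization) scheme on the objective $g_S$ (with $S=3$ or $S=4$) over the closed convex set $\R_+^S$, and to exploit two structural facts: (i) each inner update is the \emph{exact} minimizer of $g_S$ along one coordinate with the others fixed — this is precisely the soft-thresholding step justified by the Fact immediately preceding the algorithm, since for fixed $w_{-i}$ the map $w_i\mapsto g_S(w)$ has the form $\tfrac12(w_i - z_i)^2 + \alpha_i w_i + \text{const}$ with $\alpha_i = \lambda\sum_{j\neq k,\; j,k\neq i} w_j w_k \geq 0$; and (ii) consequently $g_S$ is nonincreasing along the iterates, and strictly decreasing unless the current point is already coordinatewise optimal, i.e.\ a critical point of $g_S$ on $\R_+^S$. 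For the asymptotic claim ($\epsilon\to 0$) I would argue: the iterates stay in the compact box $[0,z_1]^S$ (each coordinate update outputs $\max\{z_i - \alpha_i,0\}\in[0,z_i]$), $g_S$ is continuous, so $g_S(w^{(t)})$ decreases to a limit; any accumulation point $\bar w$ satisfies the coordinatewise optimality conditions by continuity of the soft-thresholding map, hence is a critical point (KKT point) of $g_S$ over $\R_+^S$. One should note $g_S$ restricted to a coordinate is \emph{strongly} convex (unit curvature in $w_i$), which gives a sufficient-decrease inequality $g_S(\text{before}) - g_S(\text{after}) \geq \tfrac12\|w^{(t+1)} - w^{(t)}\|^2$ summed over the sweep — this is the standard mechanism making block-coordinate descent converge here despite $g_S$ itself being nonconvex.

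For the finite-time bound, I would make the sufficient-decrease inequality quantitative. Over one full sweep (the \texttt{for} loop over $i\in\{1,\dots,S\}$), summing the per-coordinate strong-convexity decrease gives $g_S(w^{(t)}) - g_S(w^{(t+1)}) \geq \tfrac12\|w^{(t+1)} - w^{(t)}\|^2$ (being careful that later coordinates in the same sweep see already-updated earlier coordinates — the standard Gauss–Seidel bookkeeping still yields a decrease proportional to the squared full step). The loop continues only while $\|w' - w\| = \|w^{(t+1)} - w^{(t)}\| > \epsilon$, so each executed sweep decreases $g_S$ by at least $\epsilon^2/2$. Since $g_S \geq 0$ (it is a sum of a squared norm and $\lambda$ times a sum of nonnegative products, as all iterates are nonnegative) and the initial value is $g_S(z_{1:S}) = \lambda \cdot(\text{sum of pairwise products of }z_i) \leq 3\lambda z_1^2 \leq 3\lambda\|z\|^2$ — actually I would bound it more simply by $\lambda\sum_{j<k} z_j z_k \le 3\lambda\|z\|^2$ and then absorb constants — the number of sweeps is at most $g_S(w^{(0)})/(\epsilon^2/2)$. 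Matching this to the claimed $3\lambda\|z\|^3/\epsilon^2$ requires tracking the exact constant; I expect the clean bound comes from using $g_S(w^{(0)}) \le \lambda\sum_{j<k}z_jz_k$ and a crude $\sum_{j<k}z_jz_k \le \tfrac32\|z\|^2$, and the extra factor of $\|z\|$ in the paper's statement is likely slack (or arises from a slightly different normalization of the decrease per coordinate versus per sweep) — I would either reproduce it or note the bound holds with a possibly improved constant.

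The main obstacle is the Gauss–Seidel coupling within a single sweep: the naive per-coordinate decrease is $\tfrac12(w_i^{(t+1)} - w_i^{(t)})^2$ against the \emph{intermediate} point, and one must verify these telescope correctly to $\tfrac12\|w^{(t+1)} - w^{(t)}\|^2$ against the \emph{sweep endpoints}, which uses that each intermediate point also has objective value below $g_S(w^{(t)})$ — routine but needs care because $g_S$ is not jointly convex, so one cannot invoke convex BCD theorems off the shelf; the argument must rest only on the one-dimensional strong convexity along each coordinate, which does hold regardless of nonconvexity in the other directions. A secondary subtlety is confirming "critical point" is the right notion: since the feasible set is $\R_+^S$, the limit satisfies $w_i = \max\{z_i - \alpha_i(w), 0\}$ for every $i$, which is exactly the stationarity/KKT condition $\nabla_i g_S(w) \geq 0$, $w_i \geq 0$, $w_i\nabla_i g_S(w) = 0$ — so I would state the conclusion as convergence to a KKT point of the constrained problem and remark this coincides with the paper's usage of "critical point."
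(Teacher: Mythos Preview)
Your approach matches the paper's: per-coordinate strong convexity (unit curvature in $w_i$) gives the sufficient-decrease inequality $g_S(w^{(t)}) - g_S(w^{(t+1)}) \ge \tfrac12\|w^{(t+1)}-w^{(t)}\|^2$ via exactly the Gauss--Seidel chaining you describe, and one then telescopes against $g_S(w^{(0)})$; for the limiting claim the paper simply invokes a block-coordinate convergence theorem from the literature rather than your compactness-plus-continuity argument, but both routes are valid.

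One correction on the constant bookkeeping: you misread $g_4$'s regularizer as a sum of pairwise products, but $\mathrm{Reg}_{2:4}(w)$ is a sum of four \emph{triple} products $|w_i||w_j||w_k|$ (see \eqref{eq_semi}), so $g_4(z) = \lambda\,\mathrm{Reg}_{2:4}(z) \le 4\lambda z_1^3 \le 4\lambda\|z\|^3$ --- the cubic power in $\|z\|$ is \emph{not} slack for $S=4$ (it \emph{is} slack for $S=3$, where your $\|z\|^2$ bound is correct). The paper's own derivation in fact lands on $4\lambda\|z\|^3$ rather than the $3\lambda\|z\|^3$ stated in the proposition, so your instinct that the precise constant needed checking was right, just for a different reason.
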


\begin{proposition}[Convergence of \textsf{ProxSparse}]\label{prop:outerloop_convergence}
    Assume $\mathcal{L}$ is continuously differentiable, and that there exists $B>0$ such that $\|W_t\| \leq B$ for all $t=1,2,3,...$. Then Algorithm~\ref{alg:ProxSparse} converges to a critical point in the sense of the limiting subdifferential of the regularized objective function of \eqref{eq:reg} (see \cite{RW1998}).
\end{proposition}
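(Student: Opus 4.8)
The plan is to read Algorithm~\ref{alg:ProxSparse} as an exact forward--backward (proximal gradient) scheme applied to the composite objective $F(W):=f(W)+h(W)$ of \eqref{eq:reg}, with smooth part $f:=\mathcal{L}+\lambda_2\,\mathrm{Reg}_{W_0}$ and nonsmooth part $h:=\lambda_1\,\mathrm{Reg}_{2:4}$, and then to invoke the Kurdyka--\L{}ojasiewicz (KL) convergence template for nonconvex nonsmooth descent methods (Attouch--Bolte--Svaiter; Bolte--Sabach--Teboulle). First I would assemble the structural facts this template needs. (i) $\mathrm{Reg}_{W_0}$ is a fixed-coefficient polynomial, hence $C^\infty$; combined with the hypothesis that $\mathcal{L}$ is $C^1$ and with the additional (mild, standard for network losses) assumption that $\nabla\mathcal{L}$ is locally Lipschitz, $f$ has a gradient that is $L$-Lipschitz on the ball $\{\|W\|\le B\}$, which by hypothesis contains every iterate $W_k$ and every segment $[W_k,W_{k+1}]$. (ii) $h=\lambda_1\,\mathrm{Reg}_{2:4}$ is proper, continuous (a sum of products of absolute values of polynomials), nonnegative, and its proximal map is computed exactly --- this is the standing assumption of the proposition that \textsf{EnumALM} (Algorithm~\ref{alg:enumALM}) returns the correct solution. (iii) $F$ is bounded below on $\{\|W\|\le B\}$, since the loss is bounded below and both regularizers are nonnegative. (iv) $F$ satisfies the KL inequality at every point: $\mathrm{Reg}_{2:4}$ and $\mathrm{Reg}_{W_0}$ are semi-algebraic, so under the mild assumption that $\mathcal{L}$ is definable in an o-minimal structure (true, e.g., for networks with analytic activations) the whole of $F$ is definable, hence KL.

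With these in hand, the core of the proof is to verify the two inequalities that feed the KL argument. \emph{Sufficient decrease:} from the optimality of the backward step --- read at scale $\eta_k$, i.e., $W_{k+1}=\argmin_W \tfrac{1}{2\eta_k}\|W-V\|^2 + \lambda_1\,\mathrm{Reg}_{2:4}(W)$ with $V=W_k-\eta_k\nabla f(W_k)$ --- together with the descent lemma for $f$ along $[W_k,W_{k+1}]$, the standard rearrangement gives $F(W_{k+1}) + a\|W_{k+1}-W_k\|^2 \le F(W_k)$ with $a=\tfrac{1}{2\overline\eta}-\tfrac{L}{2}>0$, provided the step sizes obey $0<\underline\eta\le\eta_k\le\overline\eta<1/L$; in particular $F(W_k)$ is nonincreasing and $\sum_k\|W_{k+1}-W_k\|^2<\infty$. \emph{Relative error:} Fermat's rule for the backward step gives $\tfrac{1}{\eta_k}(V-W_{k+1})\in\partial h(W_{k+1})$, and since $f$ is $C^1$ the limiting-subdifferential sum rule \cite{RW1998} yields $\partial F=\nabla f+\partial h$, so $\nabla f(W_{k+1})-\nabla f(W_k)+\tfrac{1}{\eta_k}(W_k-W_{k+1})\in\partial F(W_{k+1})$, hence $\mathrm{dist}(0,\partial F(W_{k+1}))\le b\,\|W_{k+1}-W_k\|$ with $b=L+1/\underline\eta$. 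Finally, boundedness of the iterates gives a limit point $\bar W$ along a subsequence, and continuity of $F$ forces the function values to converge to $F(\bar W)$. These three ingredients are exactly the hypotheses of the abstract KL convergence theorem, which then yields $\sum_k\|W_{k+1}-W_k\|<\infty$; thus $(W_k)$ is Cauchy and converges to $\bar W$, and by the relative-error bound plus closedness of the graph of $\partial F$, $0\in\partial F(\bar W)$, i.e., $\bar W$ is a critical point of \eqref{eq:reg} in the limiting-subdifferential sense.

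I expect the main obstacle to be reconciling the \emph{stochastic} gradient oracle $\mathcal{G}$ of Algorithm~\ref{alg:ProxSparse} with the deterministic, whole-sequence conclusion: the KL machinery requires exact gradients (or noise that vanishes along the iterations), because under persistent stochastic error the sufficient-decrease inequality breaks and one can only hope, under extra bounded-variance and diminishing-step assumptions, for a subsequence converging almost surely to the set of critical points. I would therefore state this proposition for the deterministic (full calibration batch) version of the update --- which I believe is the intended reading given the ``critical point'' phrasing --- or otherwise downgrade the conclusion and argue via stochastic approximation. The remaining work is comparatively routine: spelling out precisely which ``mild assumptions'' on $\mathcal{L}$ are used (local Lipschitzness of $\nabla\mathcal{L}$ for the descent lemma; o-minimal definability for KL), and the bookkeeping needed so that the backward step's scaling makes the Lyapunov function literally the objective of \eqref{eq:reg} (with a fixed-scale prox one instead lands at a critical point of $F$ with $\lambda_1$ replaced by $\lambda_1/\eta_k$). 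Neither of these is deep; the substance is the forward--backward-plus-KL template combined with the explicit forms of the regularizers and the smoothness/continuity facts recorded in Propositions~\ref{prop:24} and~\ref{prop:locality}.
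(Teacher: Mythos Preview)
Your proposal is correct and follows essentially the same route as the paper: the paper's proof simply observes that $f=\mathcal{L}+\lambda_2\,\mathrm{Reg}_{W_0}$ has a locally Lipschitz gradient, notes that the composite objective is semi-algebraic, and then cites an off-the-shelf KL-based convergence result for proximal gradient on bounded iterates (specifically \cite{CHT2022}), which is exactly the forward--backward-plus-KL template you spell out via Attouch--Bolte--Svaiter / Bolte--Sabach--Teboulle. Your write-up is in fact more careful than the paper's two-sentence proof in that you explicitly flag the extra assumptions on $\mathcal{L}$ (locally Lipschitz gradient, definability) and the deterministic-vs-stochastic and prox-scaling issues that the paper glosses over.
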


\section{Empirical evaluation}

\label{sec:evaluation}

In this section, we provide comprehensive evaluations of ProxSparse by addressing the following research questions: \textbf{1.} \textbf{End-to-end performance:} how does ProxSparse compare to other state-of-the-art pruning methods? \textbf{2.} \textbf{The in-depth analysis of mask selection regularizer:} how does the regularizer contribute to finding the effective mask? 
\textbf{3. Efficiency benefit:} does sparsified models produced by ProxSparse improve efficiency?

\subsection{Models, tasks and baselines}
\label{sec:model}
We evaluated ProxSparse on four most advanced and widely used open-source LLM families: Mistral~\cite{jiang2023mistral}, Qwen~\cite{yang2024qwen2}, OpenLlama~\cite{openlm2023openllama} and an Llama~\cite{touvron2023llama} family. The specific models used in our experiments include Mistral-v0.1-7b, Mistral-v0.3-7b, Qwen2.5-14b, OpenLlama-7b-v2, Llama-2-7b, Llama-2-13b and Llama-3.1-8b. 

We assess the performance of pruned models from different pruning mechanisms on both zero-shot tasks and language modeling. For calibration, we followed Wanda~\cite{sun2023simple} and SparseGPT~\cite{frantar2023sparsegpt} to utilize the C4~\cite{raffel2020exploring} dataset for calibration. Zero-shot performance was evaluated with the EleutherAI LM-Eval-Harness~\cite{eval-harness} on seven widely used tasks~\cite{liu2024dora}, while Wikitext~\cite{merity2016pointer} perplexity (PPL) was used as the language modeling metric, consistent with previous evaluation protocol~\cite{sun2023simple, frantar2023sparsegpt}. The experiments use 400 data samples for calibration unless specified, with consistent counts across baselines for fair comparison. We discuss MaskLLM and present ablation studies on mask effectiveness with regards to calibration sample size in Section~\ref{sec:calib}. Comparisons on additional pruning mechanisms (ADMMPrune~\cite{bovza2024fast}, OWL~\cite{yin2023outlier} and AlphaPrune~\cite{lu2024alphapruning}) are further detailed in Appendix~\ref{more_baselines}. For hyperparameters and configurations, we detail them in Appendix~\ref{config}. Our experiments were done on Nvidia A100 GPUs.

\subsection{End to end performance evaluation}

\label{end-to-end-eval}

\begin{table*}[!t]
\centering
\caption{Experimental results on Wikitext perplexity (PPL) and 7 commonly used zero-shot natural language reasoning tasks comparing \textbf{ProxSparse} to 3 other baselines on 7 widely used LLMs (Llama-3.1-8b results are deferred to Table~\ref{tab:anon_model_2}). \textbf{Bold} indicates the best pruning performance, while \textit{italic} represents the original unpruned performance. SparseGPT updates weights to minimize reconstruction error, while the other methods keep retained weights frozen. ProxSparse consistently yields better results compared to all other baselines.}
\label{tab: main_exp_table}
\resizebox{1\textwidth}{!}{
\begin{tblr}{
  cells = {c},
  vline{2-3,11} = {-}{},
  hline{1-2,7,12,17,22,27,32,37} = {-}{},
}
                & Weight Update & Wikitext PPL   & ARC-C          & ARC-E          & SIQA           & HellaSwag      & OBQA           & PIQA           & TruthfulQA     & AVG            \\
Mistral-v0.1-7b & -             & \textit{4.91}           & \textit{0.503}          & \textit{0.809}          & \textit{0.467}          & \textit{0.612} & \textit{0.324} & \textit{0.806} & \textit{0.354} & \textit{0.554}
          \\
magnitude       & \ding{55}            & 14.18          & 0.310          & 0.666          & 0.417          & 0.488          & 0.204          & 0.732          & 0.314          & 0.447          \\
SparseGPT       & \ding{51}           & 9.43           & 0.345          & 0.684          & 0.418          & 0.469          & \textbf{0.240} & 0.730          & 0.316          & 0.501          \\
Wanda           & \ding{55}            & 11.49          & 0.336          & 0.665          & 0.408          & 0.444          & 0.214          & 0.716          & 0.307          & 0.441          \\
ProxSparse         & \ding{55}            & \textbf{8.92}  & \textbf{0.362} & \textbf{0.698} & \textbf{0.428} & \textbf{0.525} & 0.232          & \textbf{0.756} & \textbf{0.350} & \textbf{0.527} \\
Mistral-v0.3-7b & -             & \textit{4.95} & \textit{0.490} & \textit{0.797} & \textit{0.458} & \textit{0.609} & \textit{0.336} & \textit{0.803} & \textit{0.353} & \textit{0.549}
        \\
magnitude       & \ding{55}            & 13.52          & 0.332          & 0.665          & 0.413          & 0.488          & 0.226          & 0.738          & 0.309          & 0.453          \\
SparseGPT       & \ding{51}           & 9.23           & 0.353          & 0.687          & 0.421          & 0.470          & \textbf{0.248} & 0.733          & 0.308          & 0.458          \\
Wanda           & \ding{55}            & 10.97          & 0.311          & 0.648          & 0.408          & 0.442          & 0.206          & 0.716          & 0.300          & 0.433          \\
ProxSparse         & \ding{55}            & \textbf{8.68}  & \textbf{0.362} & \textbf{0.697} & \textbf{0.429} & \textbf{0.525} & 0.242          & \textbf{0.751} & \textbf{0.321} & \textbf{0.475} \\
Qwen2.5-14B    & -             & \textit{4.93}  & \textit{0.56}  & \textit{0.822} & \textit{0.554} & \textit{0.634} & \textit{0.342} & \textit{0.814}& \textit{0.493} & \textit{0.602}\\
magnitude       & \ding{55}    & 48.87          & 0.359          & 0.638          & 0.405          & 0.418          & 0.256          & 0.680         & 0.356          & 0.444          \\
SparseGPT       & \ding{51}    & \textbf{9.19}           & 0.405          & 0.750          & \textbf{0.476}          & 0.512          & \textbf{0.296}          & 0.753         & 0.367          & 0.507          \\
Wanda           & \ding{55}    & 11.69          & 0.389          & 0.729          & 0.440          & 0.491          & 0.286          & 0.740         & 0.331          & 0.485          \\
ProxSparse         & \ding{55}    & 9.28           & \textbf{0.456}          & \textbf{0.772}          & 0.456          & \textbf{0.535}          & 0.290          & \textbf{0.756}         & \textbf{0.406}          & \textbf{0.525}          \\
OpenLlama-7b-v2 & -             & \textit{6.48} & \textit{0.387} & \textit{0.725} & \textit{0.441} & \textit{0.557} & \textit{0.296} & \textit{0.789} & \textit{0.336} & \textit{0.504}
          \\
magnitude       & \ding{55}            & 36.15          & 0.230          & 0.498          & 0.380          & 0.360          & 0.162          & 0.683          & 0.306          & 0.374          \\
SparseGPT       & \ding{51}           & 11.35          & 0.278          & 0.602          & 0.412          & 0.428          & 0.214          & 0.713          & 0.301          & 0.420          \\
Wanda           & \ding{55}            & 13.81          & 0.261          & 0.575          & 0.409          & 0.409          & 0.196          & 0.703          & \textbf{0.310} & 0.409          \\
ProxSparse         & \ding{55}            & \textbf{9.91}  & \textbf{0.281} & \textbf{0.616} & \textbf{0.415} & \textbf{0.472} & \textbf{0.236} & \textbf{0.720} & 0.299          & \textbf{0.434} \\
Llama-2-7b    & -             & \textit{5.12} & \textit{0.433} & \textit{0.763} & \textit{0.461} & \textit{0.571} & \textit{0.314} & \textit{0.781} & \textit{0.321} & \textit{0.521}
        \\
magnitude       & \ding{55}            & 54.74          & 0.301          & 0.618          & 0.411          & 0.454          & 0.216          & 0.701          & 0.322          & 0.432          \\
SparseGPT       & \ding{51}           & 10.30          & 0.326          & 0.655          & \textbf{0.412} & 0.435          & 0.246          & 0.713          & 0.304          & 0.441          \\
Wanda           & \ding{55}            & 11.42          & 0.311          & 0.623          & 0.403          & 0.413          & \textbf{0.248} & 0.706          & 0.305          & 0.430          \\
ProxSparse         & \ding{55}            & \textbf{8.51}  & \textbf{0.331} & \textbf{0.656} & 0.407          & \textbf{0.478} & 0.242          & \textbf{0.716} & \textbf{0.328} & \textbf{0.452} \\
Llama-2-13b    & -             & \textit{4.57} & \textit{0.485} & \textit{0.794} & \textit{0.473} & \textit{0.601} & \textit{0.352} & \textit{0.791} & \textit{0.314} & \textit{0.544}
         \\
magnitude       & \ding{55}            & 8.32           & 0.319          & 0.623          & 0.408          & 0.501          & 0.232          & 0.717          & 0.309          & 0.444          \\
SparseGPT       & \ding{51}           & 8.14           & 0.378          & 0.714          & \textbf{0.437} & 0.478          & 0.282          & 0.735          & 0.296          & 0.473          \\
Wanda           & \ding{55}            & 8.35           & 0.340          & 0.683          & 0.424          & 0.464          & 0.246          & \textbf{0.739} & 0.292          & 0.455          \\
ProxSparse         & \ding{55}            & \textbf{6.61}  & \textbf{0.383} & \textbf{0.720} & 0.427          & \textbf{0.532} & \textbf{0.288} & 0.723          & \textbf{0.319} & \textbf{0.486} 
\end{tblr}
}
\end{table*}

We first present end-to-end performance comparison against other baselines that enforce 2:4 sparsity: magnitude pruning~\cite{han2015deep}, SparseGPT~\cite{frantar2023sparsegpt}, and Wanda~\cite{sun2023simple}. Table~\ref{tab: main_exp_table} presents Wikitext PPL and performance on seven widely used zero-shot reasoning tasks. Overall, ProxSparse consistently outperforms all baselines across tested models.
\paragraph{Language modeling} We first evaluate language modeling. ProxSparse surpasses magnitude pruning and outperforms Wanda, the SOTA mechanism without weight updates at the same scale. More specifically, ProxSparse achieves a PPL of 9.91 vs. Wanda's 13.81 on OpenLlama-7b-v2 with 28\% improvement. Similarly, ProxSparse achieves a PPL of 8.51 on Llama-2-7b, compared to Wanda's 11.42, reflecting a 35\% improvement. In the Llama-3.1-8b experiments shown in Table~\ref{tab:anon_model_2}, ProxSparse reduces PPL from Wanda's 20.91 to 13.63. 
Compare to the Llama-2-7b model, Llama-3.1-8b have more information encoded in the model weights as much larger training corpus was used during pretraining. This significant performance improvement highlights the potential of ProxSparse's effectiveness in handling dense model pruning mask selection.
Even when compared to SparseGPT, which updates the weights to minimize error, ProxSparse still outperforms it by up to an 18\% margin, as demonstrated in the Llama-2-7b experiments. In summary, across different models, ProxSparse consistently achieves better PPL with a significant gap compared to other baselines.
\paragraph{Zero-shot Task Performance} We present the performance analysis on seven widely used zero-shot natural language reasoning tasks. 
Consistent with the language modeling results, ProxSparse significantly outperforms both magnitude pruning and Wanda. In the Mistral-v0.1-7B experiments, ProxSparse achieved an average accuracy of 52.7\%, compared to Wanda’s 44.1\%, marking a 20\% improvement in performance. Even with weight updates in SparseGPT, ProxSparse consistently achieves higher accuracy. Similar trends hold for Qwen2.5-14b and other models as well. This highlights ProxSparse's effectiveness in finding an optimal semi-structured mask to maintain superior performance, even compared to pruning methods with weight reconstruction for error reduction.
\paragraph{Analysis of Better Performance} ProxSparse consistently outperforms all baselines across evaluated models. Its advantage stems from the global feedback in mask exploration, which enables ProxSparse to overcome localized constraints. By optimizing in an end-to-end manner, ProxSparse achieves superior performance gains.

\subsection{Deep dive into the regularizing mechanism}

This section explores the core properties of the mask selection regularizer. The regularizer relaxes rigid mask selection constraints into differentiable optimization for end-to-end learning. In the meantime, its added flexibility with "wiggling room" enhances exploration for better convergence. We ask the question: how does this flexibility aid in exploring the optimal mask during optimization?


\subsubsection{Hard constraint v.s. soft regularization}

To showcase the effectiveness of soft regularization with flexibility, we compare it with strict constraints. Unlike the gradually sparse regularizer, projected gradient descent (PGD) imposes hard thresholding during optimization. We conducted four experiments to evaluate both regularizers for mask selection, testing each with both soft and hard constraints, as shown in Table~\ref{tab: pip}.
In proximal gradient descent, "hard sparsity constraints" in the table enforce zeroing two of every four weights after each update, ensuring rigid 2:4 structural sparsity. "Hard frozen weights" reset the two largest-magnitude weights to their original values, enforcing strict objectives for mask selection.
With the relaxed regularizer, weights gradually shrink towards the 2:4 pattern (shown in Figure~\ref{fig:llama-lambda}(a)), while the retained weights are encouraged to approximate their original values. This relaxation meets both objectives under more flexible constraints. Table~\ref{tab: pip} indicates that hard constraints performs worst, while relaxed constraints enhance performance. Fully regularizing both semi-structured and frozen weight constraints maximizes flexibility, achieving the best results.
\begin{table}[!t]
\centering
\caption{Wikitext PPL under hard/soft constraints. Relaxing mask selection constraints improves performance over hard thresholding. Bold indicates the best result.}
\vspace{0.5em}
\label{tab: pip}
\resizebox{1\linewidth}{!}{%

\begin{tabular}{c|cccc} 
\hline
                & \begin{tabular}[c]{@{}c@{}}Both with\\~relaxation\end{tabular} & \begin{tabular}[c]{@{}c@{}}Fronzen weight\\relaxation\end{tabular} & \begin{tabular}[c]{@{}c@{}}Sparsity constraints\\relaxation\end{tabular} & \begin{tabular}[c]{@{}c@{}}Both with hard\\Constraints\end{tabular}  \\ 
\hline
Mistral-v0.3-7b & \textbf{8.68}                                                  & 13.23                                                              & 11.24                                                                    & 13.6                                                                 \\ 
\hline
OpenLlama-7b-v2 & \textbf{9.91}                                                  & 34                                                                 & 33.07                                                                    & 35.28                                                                \\ 
\hline
\end{tabular}
}
\vspace{-2em}
\end{table}

\begin{figure*}[!t]
    \centering
    \includegraphics[width=\textwidth]{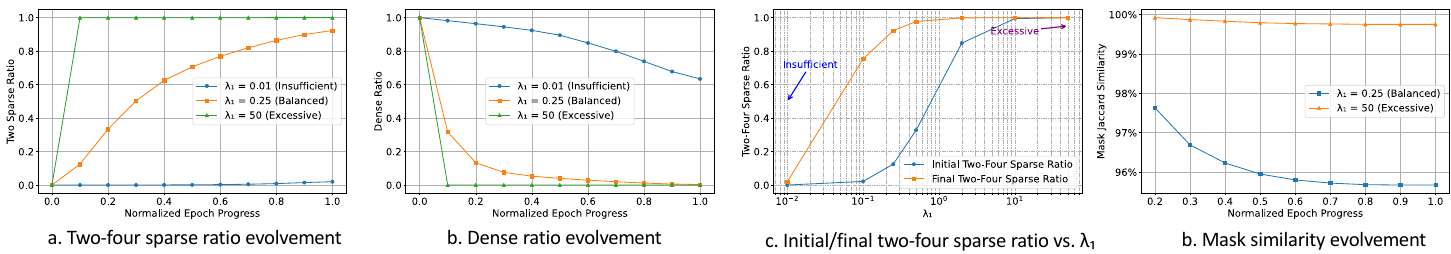} 
\caption{Evolution of sparsity ratio on Llama-2-7b based on the degree of regularization. (a) Evolution of the 2:4 sparsity ratio over learning progress, where an insufficient regularization degree leads to under-learning. (b) With a larger $\lambda_{1}$ parameters shrink more quickly towards 2:4 sparsity, resulting in early commitment to a suboptimal mask. (c) Comparison of the 2:4 sparse block ratios at early (0.1 epochs) and final stages of learning. (d) Mask similarity between the final mask and the early mask obtained after 10\% epochs of learning. An excessively large $\lambda_{1}$ results in premature mask commitment, causing mask selection to stagnate and hindering optimal mask discovery.}
    \label{fig:openllama-lambda}
\end{figure*}
\subsubsection{The sparsity pattern enforcer}
\label{sec:lambda1}

In the following sections, we analyze the contribution of each regularizer individually, starting with the sparsity pattern regularizer, which encourages 2:4 sparsity.
The regularizer coefficient, $\lambda_{1}$, controls the strength of regularization: higher values enforce more aggressive parameter shrinkage, approaching a harder projection with less flexibility.
To isolate the effect of regularization, we only study the semi-structured regularizer in this analysis. We examine how varying its strength impacts mask learning. As shown in Table~\ref{tab: reg_1}, optimal mask selection occurs at a balance between gradual and aggressive regularization—smaller values lead to conservative mask evolution, while larger values impose stricter constraints, both reducing performance.


To better understand this phenomenon, we analyze the regularizer's impact in detail. We show the evolution of 2:4 sparsity across different $\lambda_{1}$ values for Llama-2-7b (Figures~\ref{fig:openllama-lambda}) and OpenLlama-7b-v2 (Figure~\ref{fig:llama-lambda} in Appendix~\ref{app:reg_anon}) with consistent trend. We use Llama-2-7b as the example. if $\lambda_{1}$ is too low, the model remains largely dense, as shown in Figures~\ref{fig:llama-lambda}(a), (b) and (c). This suggests under-learning, where unimportant weights are not fully recognized by the end of learning, resulting in incomplete mask selection. 
Conversely, a high $\lambda_{1}$ value leads to early commitment to a specific mask. In Figure~\ref{fig:llama-lambda}(d), the yellow line shows similarity to the "early mask" obtained after just 10\% of learning. The final mask retains $\sim$99.5\% similarity to the early one, indicating stalled optimization.
In between, a balanced strength allows flexible mask exploration that avoids premature commitment, while also enabling more effective learning than excessively low values.

\begin{table}[!t]
\centering
\caption{
Wikitext PPL across $\lambda_{1}$.
Optimal performance occurs at balanced regularization. Bold indicates the best performance.}
\label{tab: reg_1}
\vspace{0.5em}
\resizebox{1\linewidth}{!}{%

\begin{tblr}{
  cells = {c},
  vline{2} = {-}{},
  hline{1,3,5,7} = {-}{},
}
~$\lambda_{1}$         & 0.001  & 0.01  & 0.1   & 0.25          & 0.5            & 2     & 10    & 50    & inf   \\
Mistral-v0.3-7b & 14.19 & 9.66  & 9.04 & 8.69         & \textbf{8.68} & 8.82 & 9.32  & 10.94 & 11.33 \\
~$\lambda_{1}$         & 0.001  & 0.1   & 0.25  & 1             & 2              & 5     & 10    & 50    & inf   \\
OpenLlama-7b-v2 & 25.23 & 11.25 & 10.89 & \textbf{9.91} & 10.13         & 10.97 & 12.11 & 23.29 & 33.09 
\end{tblr}
}
\vspace{-1.5em}
\end{table}

\subsubsection{The frozen weight retention enforcer}
\label{frozen}
We analyze the impact of the frozen weight regularizer, with its strength controlled by $\lambda_{2}$. Using the optimal $\lambda_{1}$ from previous analyses, we vary $\lambda_{2}$ to assess its effect. Interestingly, Table~\ref{tab: reg_2} shows a broad optimal performance plateau, suggesting that a robust range of $\lambda_{2}$ values can be applied without significantly affecting performance.

We further plot the evolution of the relative norm gap across $\lambda_{2}$ in Figure~\ref{fig:lambda2}. 
This gap quantifies the difference in norm between the learned model and the original model, with the mask applied. It assesses how closely retained weights preserve their original values. We see that even without the regularizer, the relative norm gap stays $\sim$20\%. Adding the regularizer incurs small impact until the strength reaches a high extent.
This may result from implicit frozen weight constraints imposed by the sparsity pattern regularizer, which we leave it to future investigation. As $\lambda_{2}$ increases toward infinity, strict projection enforcement degrades performance, aligns with previous finding and reinforces the need for gradual and flexible mask optimization.


\begin{table}[!t]
\centering
\caption{Wikitext PPL across $\lambda_{2}$. A broad optimal plateau suggests that performance remains stable across a robust range of $\lambda_{2}$.}
\label{tab: reg_2}
\vspace{0.5em}
\resizebox{1\linewidth}{!}{%
\begin{tabular}{c|cccccccc} 
\hline
$\lambda_{2}$          & 0     & 0.5   & 2     & 5     & 20    & 100    & 2500  & inf     \\
Mistral-v0.3-7b & 8.68 & 8.88 & 8.9   & 8.82 & 8.99  & 8.85  & 9.11  & 13.23   \\ 
\hline
$\lambda_{2}$          & 0     & 0.5   & 2     & 20    & 100   & 500    & 2500  & inf     \\
OpenLlama-7b-v2 & 9.91  & 9.92 & 9.96 & 10.12 & 10.41 & 10.9 & 12.38  & 34  \\ 
\hline
\end{tabular}
}
\vspace{-1em}
\end{table}

\begin{figure}[!t]
    \centering
    \vspace{1em}
    \includegraphics[width=\linewidth]{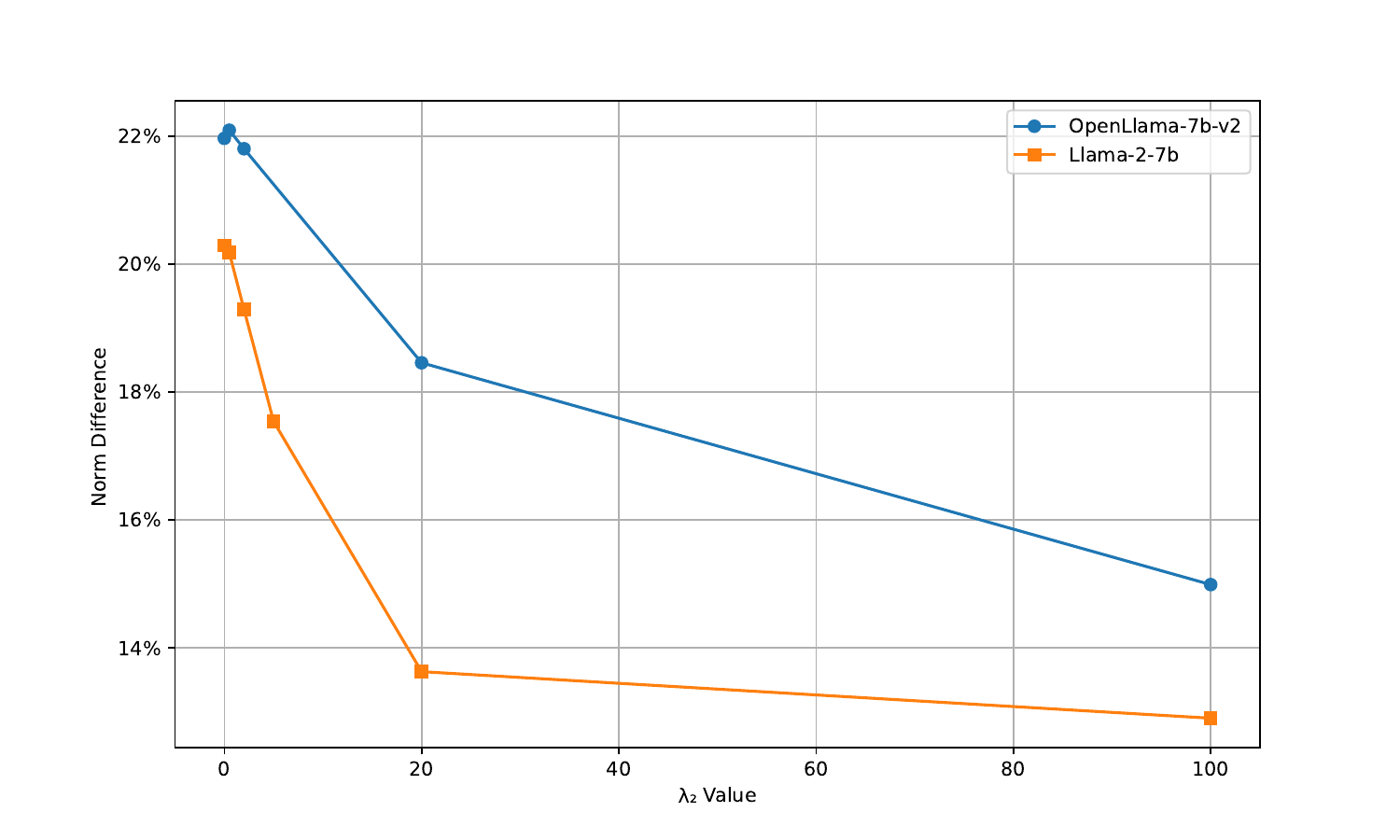} 
    \caption{The relative norm difference over different $\lambda_{2}$. The relative norm gap measures how closely retained weights match their original values post-training, with the semi-structured mask applied. The relative norm remained low ($\sim$20\%) with minimal change until a high lambda value was applied.}
    \label{fig:lambda2}
\end{figure}
\subsubsection{Performance evolution with varying numbers of calibration samples}
\label{sec:calib}

Our method enables effective semi-structured mask selection with only hundreds of samples. Here we analyze performance based on the number of calibration samples with OpenLlama-7b-v2 and Llama-2-7b.
We compare our results with MaskLLM, SparseGPT, and Wanda using 100, 200, and 400 samples. MaskLLM struggles with small sample sizes, making it a complementary method to ours in large-scale learning. We use the statistics reported in the MaskLLM paper~\cite{fang2024maskllm}.
As shown in Table~\ref{tab:cali}, MaskLLM performed worst on Llama-2-7b with low sample sizes, likely due to its reliance on extensive training for effective masks. SparseGPT and Wanda showed minimal improvement with increased calibration samples, consistent with previous observations~\cite{fang2024maskllm, sun2023simple}. 
ProxSparse achieved the best across these sample sizes, with slight performance gains as samples increased within our target range. This confirms the effectiveness of our method in learning towards an optimal mask for semi-structured sparsity.

\begin{table}[!t]
\vspace{-1em}
\centering
\caption{Wikitext PPL across calibration sample sizes. ProxSparse outperformed all methods, with performance slightly improved as sample size increased, confirming its effectiveness in optimal mask learning. Bold indicates the best performance.}

\label{tab:cali}
\resizebox{1\linewidth}{!}{%
\begin{tblr}{
  cells = {c},
  hlines,
  vline{2,5-6} = {-}{},
}
OpenLlama-7b-v2 & 100            & 200            & 400           & Llama-2-7b & 100           & 200           & 400           \\
MaskLLM         & -              & -              & -             & MaskLLM      & $>13$            & $>13$            & $>11$            \\
SparseGPT       & 11.581         & 11.478         & 11.35         & SparseGPT    & 10.36         & 10.32         & 10.298        \\
Wanda           & 13.854         & 13.828         & 13.814        & Wanda        & 11.46         & 11.45         & 11.42         \\
ProxSparse         & \textbf{10.39} & \textbf{10.09} & \textbf{9.91} & ProxSparse      & \textbf{9.24} & \textbf{8.99} & \textbf{8.51} 
\end{tblr}
}
\end{table}

\subsection{Improved efficiency during inference}

\label{app:effi}
Finally, we evaluate the efficiency metrics of the sparsified model produced by ProxSparse. We present wall-clock inference speedup and memory footprint improvements for the 2:4 semi-structured sparsified model induced by ProxSparse. Our experiments are conducted on Nvidia A100 GPUs. We utilize the Nvidia CUTLASS library as the underlying implementation for 2:4 semi-structured sparse operations.

\begin{table}[!t]
    \centering
    \caption{Speedup and memory utilization improvements achieved by ProxSparse induced 2:4 sparsity models(left: speedup, right: memory reduction). ProxSparse delivers a 1.3x–1.35x speedup for matrix multiplication and a 1.26x end-to-end inference speedup on the Mistral-v0.3-7b model. Additionally, ProxSparse reduces memory consumption by 29.5\%–37.3\% across different models, demonstrating its efficiency in both computation and memory utilization.}
    \vspace{1em}
    \label{memory}
    \resizebox{1\linewidth}{!}{%

    \begin{tabular}{l c || l c}
    \hline
    \textbf{Module name} & \textbf{Speedup ratio} 
    & \textbf{Model family} & \textbf{Memory gain} \\
    \hline\hline
    self\_attn q/k/v/o & 1.35x  & Openllama\_7b\_v2 & 70.50\% \\
    mlp up/down/gate   & 1.30x  & Qwen2.5-14b        & 67.50\% \\
    End-to-end inference                  & 1.26x     & Mistral-v0.3-7b     & 62.70\% \\
    \hline
    \end{tabular}
    }
\end{table}
\subsubsection{Inference speedup}
We follow the evaluation setup of previous work~\cite{frantar2023sparsegpt,sun2023simple} and measure the latency of matrix multiplication in linear layers. The results of Mistral-v0.3-7b (batch size of 1) are presented in Table~\ref{memory}. As shown in the table, 2:4 semi-structured sparsity induced by ProxSparse provides significant inference speedup for linear layers in LLMs, achieving an average speedup gains of 1.3 to 1.35. Additionally, we measured the end-to-end inference wall-clock speedup and observed a 1.26x speedup, consistent with other sparsification methods evaluated in our experiments. We emphasize that the inference speedup is not specific to our pruning method but rather a result of the inherent computational efficiency enabled by semi-structured sparsity.

\subsubsection{Memory footprint improvements}

Next, we evaluate the memory footprint reductions achieved by ProxSparse-sparsified models. The results of peak memory utilization during inference time (batch size = 1) for different models are presented in Table~\ref{memory}. ProxSparse reduces peak memory usage by 29.5\% to 37.3\%, demonstrating significant memory savings with 2:4 sparsification. The exact reduction varies across different model architectures due to differences in model weight sizes, which influence activation sizes and ultimately affect peak memory consumption. Overall, ProxSparse effectively reduces memory footprint during LLM inference, highlighting the system benefits of the 2:4 sparse operation.

\if 0
\subsubsection{Inference speedup}
We follow the evaluation setup of previous work~\cite{frantar2023sparsegpt,sun2023simple} and measure the latency of matrix multiplication in linear layers. The results for Mistral-v0.3-7b (batch size of 1) are presented in Table~\ref{memory}. As shown in the table, 2:4 sparsity provides significant inference speedup for linear layers in LLMs, achieving an average speedup gains of 1.3 to 1.35. Additionally, we measured the end-to-end inference wall-clock speedup and observed a 1.26x speedup, consistent with other sparsification methods evaluated in our experiments. We emphasize that this inference speedup is not specific to our pruning method but rather a result of the inherent computational efficiency enabled by semi-structured sparsity.

\subsubsection{Memory footprint improvements}

The peak memory utilization during inference (batch size = 1) for different models is shown in Table~\ref{memory}. ProxSparse reduces peak memory usage by 29.5\% to 37.3\%, demonstrating significant memory savings with 2:4 sparsification. The exact reduction varies across different model architectures due to differences in model weight sizes, which influence activation sizes and ultimately affect peak memory consumption. Overall, ProxSparse effectively reduces memory footprint during LLM inference, highlighting the system benefits of the 2:4 sparse operation.

\begin{table}[ht]
    \centering
    \caption{Speedup and memory utilization improvements achieved by ProxSparse with 2:4 sparsity (left: speedup, right: memory reduction). ProxSparse delivers a 1.3x–1.35x speedup for matrix multiplication and a 1.26x end-to-end inference speedup on the Mistral-v0.3-7b model. Additionally, ProxSparse reduces memory consumption by 29.5\%–37.3\% across different models, demonstrating its efficiency in both computation and memory usage.}
    \label{memory}
    \resizebox{1\linewidth}{!}{%

    \begin{tabular}{l c || l c}
    \hline
    \textbf{Module name} & \textbf{Speedup ratio} 
    & \textbf{Model family} & \textbf{Memory gain} \\
    \hline\hline
    self\_attn q/k/v/o & 1.35  & open\_llama\_7b\_v2 & 70.50\% \\
    mlp up/down/gate   & 1.30  & Qwen-2.5-14b        & 67.50\% \\
    End-to-end inference                  & 1.26     & Mistral-v0.3-7b     & 62.70\% \\
    \hline
    \end{tabular}
    }
\end{table}
\fi

\section{Conclusion}

LLMs excel in natural language processing tasks and downstreaming tasks. However, they suffer from high computational costs due to the enormous parameter sizes. Semi-structured sparsity can improve inference efficiency, though it remains challenging due to the structural constraints during pruning. We propose a learning-based method with regularized optimization, progressively explores optimal mask through end-to-end gradient feedback. Extensive evaluation shows that ProxSparse significantly outperforms previous methods, enabling better accuracy for LLM pruning, making model deployment more cost-effective.
\section*{Impact statement}
This paper presents work with the goal is to advance the field
of machine learning. There are many potential societal
consequences of our work, none which we feel must be
specifically highlighted here.
\bibliography{example_paper}
\bibliographystyle{icml2025}

\newpage
\appendix
\onecolumn

\section{Proofs of Technical Results}\label{sec:proofs}
\begin{proof}[Proof of Proposition~\ref{prop:24}]
For the first statement, check that if at least two parameters are $0$, there is at least one $0$ in all $4\choose3$ subsets, making the whole regularizer $0$. If at least three parameters are non-zero, then there is at least one group that is non-zero. The second statement follows by symmetry.  The third statement is valid because it is a cubic function when in the strict interior of an orthant.
\end{proof}

\begin{proof}[Proof of Proposition~\ref{prop:locality}]
First observe that this regularizer applies pointwise to each coordinate of $W$. It suffices to prove the statements for one coordinate $w, w_0$. w.l.o.g. assume $w_0>0$, then the regularizer gives $\left|\frac{w(w-w_0)}{(1+\epsilon)w_0}\right|^2$. Observe that the nullspace is either $0$ or $w=w_0$, thus checking Statement 1. Statement 2 follows because all coordinates with $w = 0$ contributes $0$ to the total.  Statement 3 follows because this is a fourth order polynomial of $w$, thus continuously differentiable.
\end{proof}

\begin{proof}[Proof of Proposition~\ref{prop:innerloop_convergence}]
We start with $S=4$. Define $f:=g_4$ as a shorthand. 

It should be noted that for all $1 \leq i , j \leq 4$, the partial functions $f_{i}\left(w_{i}\right) = f\left(w\right)$, for fixed $w_{j}$ with $j \neq i$, are strongly convex and quadratic. Therefore, when $i = 1$ for instance, we have for any $w_{i}$ and $v_{i}$ that
    \begin{equation*}
        f_{i}\left(w_{i}\right) = f_{i}\left(v_{i}\right) +    f_{i}^\prime\left(v_{i}\right) \left(w_{i} - v_{i}\right) + \frac{1}{2}(w_{i} - v_{i})^{2}.
    \end{equation*}
    Therefore, for any fixed $w_{j}$ with $j \neq i$, let $w_{i}^{\ast} = \argmin_{w_i\geq 0} f_i(w)$.
    $w_{i}^{\ast}$ satisfies the following (when $i = 1$ for instance)
\resizebox{\columnwidth}{!}{
    \parbox{\columnwidth}{
        \begin{align}
            f\left(w\right) = f_{1}\left(w_{1}\right) & = f_{1}\left(w_{1}^{\ast}\right) + f_{1}^\prime\left(w_{1}^{\ast}\right) \left(w_{1} - w_{1}^{\ast}\right) + \frac{1}{2}(w_{1} - w_{1}^{\ast})^2 \nonumber \\
            & \geq f_{1}\left(w_{1}^{\ast}\right) + \frac{1}{2}(w_{1} - w_{1}^{\ast})^2 \nonumber \\
            & = f\left(w_{1}^{\ast} , w_{2} , w_{3} , w_{4}\right) + \frac{1}{2}(w_{1} - w_{1}^{\ast})^2. 
        \label{DescentPro}
        \end{align}
    }
}
The inequality is due to the first-order optimality condition.
    This means that we have a sufficient descent property with respect to each minimized variable.

    \begin{proposition}
        Let $\left\{ w^{k} \right\}_{k \in \mathbb{N}}$ be a sequence generated by the Alternating Minimization algorithm. Then, for all $k \in \mathbb{N}$, we have that 
        \begin{equation}\label{eq:sufficient_descent}
            f\left(w^{k}\right) \geq f\left(w^{k + 1}\right) + \frac{1}{2}\| w^{k + 1} - w^{k} \|^{2}.
        \end{equation}
    \end{proposition}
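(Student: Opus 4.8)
The plan is to apply the per-coordinate descent inequality \eqref{DescentPro} once for each of the $S$ block updates that make up a single outer iteration of \textsf{ALM}, and then telescope the resulting bounds. The proof is a standard block-coordinate-descent argument; the only thing to keep straight is the bookkeeping of the intermediate iterates within one sweep.

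First I would introduce notation for those intermediate iterates: set $w^{k,0} := w^{k}$ and let $w^{k,i}$ denote the vector obtained after the inner \textbf{for}-loop of Algorithm~\ref{alg:alm} has processed coordinates $1,\dots,i$, so that $w^{k,S} = w^{k+1}$. By construction $w^{k,i}$ agrees with $w^{k,i-1}$ in every coordinate except the $i$-th, and the new value of that coordinate is the minimizer of $f$ along coordinate $i$ with the others held fixed at the current values --- exactly the point $w_i^{\ast}$ appearing in \eqref{DescentPro} when the fixed coordinates are taken to be those of $w^{k,i-1}$ (this is where the Fact on soft-thresholding and the fact that $f$ is linear in each single coordinate enter, giving the exact second-order expansion with leading constant $\tfrac12$).

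Next I would apply \eqref{DescentPro} at the $i$-th step to get
\[
f(w^{k,i-1}) \;\ge\; f(w^{k,i}) + \tfrac12\bigl(w^{k,i-1}_i - w^{k,i}_i\bigr)^2 \;=\; f(w^{k,i}) + \tfrac12\|w^{k,i-1}-w^{k,i}\|^2,
\]
where the equality holds because the two iterates coincide outside coordinate $i$. Summing these inequalities over $i=1,\dots,S$ telescopes the function values and yields $f(w^{k}) \ge f(w^{k+1}) + \tfrac12\sum_{i=1}^{S}\|w^{k,i-1}-w^{k,i}\|^2$.

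Finally I would observe that the $S$ inner steps modify pairwise distinct coordinates, so the increments $w^{k,i}-w^{k,i-1}$ have mutually orthogonal supports and therefore $\sum_{i=1}^{S}\|w^{k,i-1}-w^{k,i}\|^2 = \|w^{k+1}-w^{k}\|^2$, which is precisely \eqref{eq:sufficient_descent}; the case $S=3$ is identical with $g_3$ in place of $g_4$. There is no genuine obstacle here --- the substantive content is already recorded in the strong convexity / first-order optimality of \eqref{DescentPro}; the only point that needs care is noting that, because a sweep touches each coordinate exactly once, the per-step squared displacements add up \emph{exactly} to the squared norm of the full-sweep displacement rather than merely lower-bounding it.
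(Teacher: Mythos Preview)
Your proposal is correct and follows essentially the same route as the paper: apply the per-coordinate descent inequality \eqref{DescentPro} at each of the $S$ inner updates, telescope the resulting chain of inequalities, and use that the coordinate-wise displacements have disjoint supports so their squared norms sum exactly to $\|w^{k+1}-w^k\|^2$. The paper simply writes out the four inequalities explicitly for $S=4$ rather than introducing the intermediate-iterate notation $w^{k,i}$, but the content is identical. One small slip: you write that ``$f$ is linear in each single coordinate'' --- it is the regularizer $\mathrm{Reg}_{2:4}$ that is linear in each coordinate, so $f$ is \emph{quadratic} in each coordinate with unit second derivative, which is what yields the exact constant $\tfrac12$ in \eqref{DescentPro}.
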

        \begin{proof}
            Let $k \in \mathbb{N}$. Using \eqref{DescentPro} for all $1 \leq i \leq 4$ yields
            \begin{align*}
                f\left(w^{k}\right) &\geq f\left(w_{1}^{k + 1}, w_{2}^{k}, w_{3}^{k}, w_{4}^{k}\right) + \frac{1}{2}(w_{1}^{k + 1} - w_{1}^{k})^2 \\
                f\left(w_{1}^{k + 1}, w_{2}^{k}, w_{3}^{k}, w_{4}^{k}\right) &\geq f\left(w_{1}^{k + 1}, w_{2}^{k + 1}, w_{3}^{k}, w_{4}^{k}\right) + \frac{1}{2}(w_{2}^{k + 1} - w_{2}^{k})^2 \\
                f\left(w_{1}^{k + 1}, w_{2}^{k + 1}, w_{3}^{k}, w_{4}^{k}\right) &\geq f\left(w_{1}^{k + 1}, w_{2}^{k + 1}, w_{3}^{k + 1}, w_{4}^{k}\right) + \frac{1}{2}(w_{3}^{k + 1} - w_{3}^{k})^2 \\
                f\left(w_{1}^{k + 1}, w_{2}^{k + 1}, w_{3}^{k + 1}, w_{4}^{k}\right) &\geq f\left(w^{k + 1}\right) + \frac{1}{2}(w_{4}^{k + 1} - w_{4}^{k})^2.
            \end{align*}
            Adding all the inequalities yields the desired result.
        \end{proof}

Telescope \eqref{eq:sufficient_descent}, we get that 
$$\min_{k\in[T]}\|w^{k+1} - w^k\|^2\leq \frac{1}{T} \sum_{k=1}^K \|w^{k+1} - w^k\|^2 \leq \frac{f(w^0) - f(w^{k+1})}{T} \leq \frac{4\lambda\|z\|^3}{T}.$$ 
The last inequality follows as we initialize at $w^0=z$, thus $f(w^0)\leq 4\lambda \|z\|^3$. Also, since $w^k\in\R^4$,  $f(w^{k+1})$ is non-negative. This completes the proof for the first statement. 

Now take $\epsilon\rightarrow 0$, as the position this algorithm halt, $\|w^{k+1} - w^k\|^2 \leq \epsilon^2 \rightarrow 0$. 

By Theorem~1 of \cite{BST2016}, $w^k$ at $k\rightarrow \infty$ is a critical point of the objective function with the non-negative constraints handled by adding an indicator function.

The argument for the $S=3$ case follows analogously (hence omitted).
\end{proof}

\begin{proof}[Proof of Proposition~\ref{prop:outerloop_convergence}]

Under the assumption, the function $\text{Reg}_{W_0}$ has a locally Lipschitz continuous gradient, which implies that the function $f$ also has a locally Lipschitz continuous gradient. Therefore, the convergence of the sequence $\{x_{t} \}_{t \in \mathbb{N}}$ convergence to a critical point of the function $\psi \equiv f + h$ follows immediately from \cite{CHT2022} (since $\psi$ is a semi-algebraic function and $\{x_{t} \}_{t \in \mathbb{N}}$ is bounded). 

\end{proof}
\if 0
\section{Details on convex optimization for Theorem~\ref{thm:split}}
\label{theory_detail}
For completeness, we restate the result from Theorem~\ref{thm:split}~\cite{jonas_paper} here. To solve \eqref{eq:prox} for any $y\in\R^4$, it suffices to solve: 
\begin{equation}\label{eq:prox_reformulate}
\small
\begin{aligned}
    \min_{w \in \mathbb{R}_+^4} \frac{1}{2} \|w - z\|^2 + \lambda \mathrm{Reg}_{2:4}(w)
\end{aligned}
\end{equation}
where $z = \mathrm{sorted}(|y|)$ is non-negative and sorted in descending order, i.e., $z_1\geq z_2\geq z_3 \geq z_4\geq 0$. Moreover, the optimal solution to \eqref{eq:prox_reformulate} must be one of the following three candidates:
\begin{enumerate}
    \item ``2-sparse solution'' $[z_1,z_2, 0,0]$; 
    \item ``3-sparse solution'', $[\dot{w}_1,\dot{w}_2,\dot{w}_3,0]$
    \item ``dense solution'' $[\ddot{w}_1,\ddot{w}_2,\ddot{w}_3,\ddot{w}_4]$
\end{enumerate}
where $\dot{w} = \argmin_{w\in\mathbb{R}_+^3} \{g_3(w) \; \text{s.t.}\; \nabla^2 g_3(w) \succeq 0 \}$ with $$g_3(w) :=\frac{1}{2}\|w - z_{1:3}\|^2 + \lambda (w_1w_2 + w_2w_3 + w_3 w_1),$$
and $\ddot{w} = \argmin_{w\in\mathbb{R}_+^4} \{g_4(w) \; \text{s.t.}\; \nabla^2 g_4(w) \succeq 0 \}$ with $g_4(w)$ being the objective function of \eqref{eq:prox_reformulate}.

\citet{jonas_paper} further showed that $\{w | \nabla^2 g_3(w)\succeq 0 \}$ and  $\{ w | \nabla^2 g_4(w)\succeq 0 \}$ are \emph{convex sets}, which makes the corresponding optimization problems convex.

\fi

\section{Hyperparameters and configurations}
\label{config}
Table~\ref{tab: config} presents the configurations and hyperparameters used in our experiments. There are three key hyperparameters for learning an optimal semi-structured mask: sparsity regularization strength ($\lambda_{1}$), frozen weight regularization extent (
$\lambda_{2}$), and learning rate. As discussed in Section~\ref{frozen}, the frozen weight regularization is robust across a wide range of values. Our learning procedure follows standard settings, using \textit{AdamW} as the optimizer with a warmup ratio of 0.1.

\begin{table}[H]
\centering
\caption{Configure of the parammeter used in the experiment}
\label{tab: config}
\resizebox{0.4\linewidth}{!}{
\begin{tblr}{
  cells = {c},
  hlines,
  vline{2-6} = {-}{},
}
                & $\lambda_{1}$~ & $\lambda_{2}$~ & Learning rate & Optimizer & Warmup-ratio \\
Mistral-v0.1-7b & 20         & 0          & 5.00E-05      & Adamw     & 0.1          \\
Mistral-v0.3-7b & 25         & 0          & 5.00E-05      & Adamw     & 0.1          \\
Qwen-2.5-14b    & 0.2             & 0             & 0.0001        & Adamw     & 0.1          \\
OpenLlama-7b-v2 & 1          & 0          & 0.0001        & Adamw     & 0.1          \\
Llama-2-7b     & 0.25       & 0          & 0.0001        & Adamw     & 0.1          \\
Llama-2-13b     & 0.5        & 0.25        & 0.0001        & Adamw     & 0.1          \\
Llama-3.1-8b     & 0.85       & 0          & 5.00E-05      & Adamw     & 0.1          
\end{tblr}
}
\end{table}

\section{Regularization trajectory of the optimization algorithm.}
\label{app:reg_path}
We illustrate the regularization path for an example initialization with different $\lambda$ value using different optimization algorithms (EnumIPM, EnumPGD, and EnumALM) in Figure~\ref{fig:regularization_path} as explained in Section~\ref{sec_effi_proximal}.
\begin{figure}[H]
    \centering
    \includegraphics[width=0.4\linewidth]
    {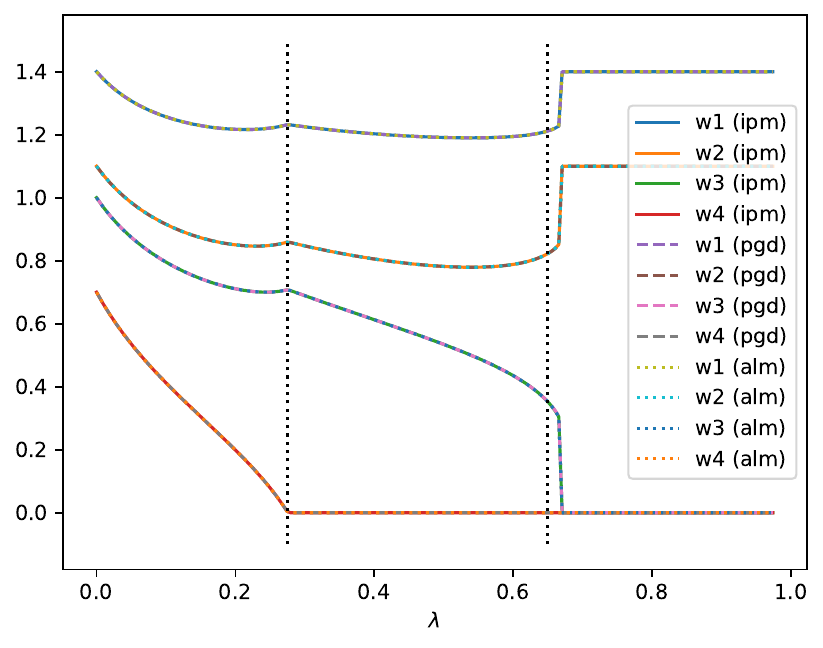}

    \caption{Illustration of the solution to \eqref{eq:prox} with an example input $y=[1.4, 1.1, 1.0, 0.7]$ as $\lambda$ increases. Observe that (1) the regularizer shrinks different coordinates differently according to their relative magnitude (2) all three algorithms return the same solution path. (3) the dashed lines indicate the two thresholds of $\lambda$ from KKT conditions above which the 3-sparse and 2-sparse solutions become critical points (a necessary condition for them to become global optimal). 
    }
    \label{fig:regularization_path}
\end{figure}

\section{End-to-end evaluation results on Llama-3.1-8b}
\label{prune_baseline}
\begin{table*}[!t]
\centering
\caption{Experimental results on Wikitext perplexity (PPL) and performance across 7 commonly used zero-shot natural language reasoning tasks comparing \textbf{ProxSparse} to 3 other baselines on Llama-3.1-8b. \textbf{Bold} indicates the best pruning performance, while \textit{italic} represents the original unpruned performance. SparseGPT updates weights to minimize reconstruction error, while the other methods keep retained weights frozen. Similar to the results in Table~\ref{tab: main_exp_table}, ProxSparse consistently yields better results compared to all other baselines.}
\label{tab:anon_model_2}
\resizebox{1\textwidth}{!}{
\begin{tblr}{
  cells = {c},
  vline{2-3,11} = {-}{},
  hline{1-2,7} = {-}{},
}
                & Weight Update & Wikitext PPL   & ARC-C          & ARC-E          & SIQA           & HellaSwag      & OBQA           & PIQA           & TruthfulQA     & AVG            \\
Llama-3.1-8b    & -             & \textit{5.84} & \textit{0.515} & \textit{0.814} & \textit{0.470} & \textit{0.600} & \textit{0.334} & \textit{0.801} & \textit{0.368} & \textit{0.557}
         \\
magnitude       & \ding{55}            & 766.91         & 0.257          & 0.454          & 0.365          & 0.335          & 0.154          & 0.634          & \textbf{0.319} & 0.360          \\
SparseGPT       & \ding{51}           & 14.61          & 0.316          & \textbf{0.647} & \textbf{0.426} & 0.435          & 0.222          & 0.705          & 0.301          & 0.434          \\
Wanda           & \ding{55}            & 20.91          & 0.269          & 0.573          & 0.400          & 0.380          & 0.192          & 0.686          & 0.309          & 0.401          \\
ProxSparse         & \ding{55}            & \textbf{13.63} & \textbf{0.333} & 0.623          & 0.422          & \textbf{0.460} & \textbf{0.240} & \textbf{0.721} & 0.296          & \textbf{0.444} \\
\end{tblr}
}
\end{table*}

In this section, we further discuss the evaluation results from Table~\ref{tab: main_exp_table}, focusing on Llama-3.1-8b. We present results on Wikitext perplexity (PPL) and performance across seven commonly used zero-shot natural language reasoning tasks, comparing ProxSparse to three baselines in Table~\ref{tab:anon_model_2}. In the Llama-3.1-8b experiments, ProxSparse significantly reduces perplexity from Wanda’s 20.91 to 13.63. The overall results, compared to Magnitude Pruning, Wanda, and SparseGPT, follow the same trends discussed in Section~\ref{end-to-end-eval}, with ProxSparse consistently outperforming all other baselines.

\section{Comparison with Additional Pruning Baselines (ADMMPrune, OWL, and AlphaPrune)}
\label{more_baselines}

\begin{table*}[!t]
\centering
\caption{Experimental results on Wikitext perplexity (PPL) and 7 commonly used zero-shot natural language reasoning tasks comparing \textbf{ProxSparse} to 3 other baselines Llama-2-7b and Mistral-v0.3-7b model. \textbf{Bold} indicates the best pruning performance, while \textit{italic} represents the original unpruned performance. AdmmPrune updates weights to minimize reconstruction error, while OWL and AlphaPrune uses dynamic sparse ratios across layers. ProxSparse consistently yields better results compared to all other baselines.}
\label{tab: app_exp_table}
\resizebox{1\textwidth}{!}{
\begin{tblr}{
  cells = {c},
  vline{2-3,11} = {-}{},
  hline{1,2,7,12} = {-}{},
}
                & Weight Update & Wikitext PPL   & ARC-C          & ARC-E          & SIQA           & HellaSwag      & OBQA           & PIQA           & TruthfulQA     & AVG            \\
Mistral-v0.3-7b & -             & \textit{4.95} & \textit{0.490} & \textit{0.797} & \textit{0.458} & \textit{0.609} & \textit{0.336} & \textit{0.803} & \textit{0.353} & \textit{0.549}
        \\
OWL          & \ding{55} & 13.03 & 0.275 & 0.594 & 0.406 & 0.417 & 0.188 & 0.688 & 0.320 & 0.413 \\
AlphaPrune   & \ding{55} & 13.58 & 0.265 & 0.529 & 0.398 & 0.407 & 0.190 & 0.668 & \textbf{0.335} & 0.399 \\
ADMMPrune    & \ding{51} & 9.06  & 0.340 & 0.680 & 0.416 & 0.471 & 0.240 & 0.739 & 0.299 & 0.455 \\
ProxSparse         & \ding{55}            & \textbf{8.68}  & \textbf{0.362} & \textbf{0.697} & \textbf{0.429} & \textbf{0.525} & \textbf{0.242}          & \textbf{0.751} & 0.321 & \textbf{0.475} \\
Llama-2-7b    & -             & \textit{5.12} & \textit{0.433} & \textit{0.763} & \textit{0.461} & \textit{0.571} & \textit{0.314} & \textit{0.781} & \textit{0.321} & \textit{0.521}
        \\
OWL          & \ding{55} & 13.17 & 0.287 & 0.591 & 0.407 & 0.420 & 0.228 & 0.695 & \textbf{0.339} & 0.425 \\
AlphaPrune   & \ding{55} & 13.01 & 0.293 & 0.607 & 0.406 & 0.411 & 0.238 & 0.690 & 0.317 & 0.424 \\
ADMMPrune    & \ding{51} & 9.67  & 0.328 & 0.653 & \textbf{0.413} & 0.440 & \textbf{0.248} & 0.714 & 0.302 & 0.442 \\
ProxSparse         & \ding{55}            & \textbf{8.51}  & \textbf{0.331} & \textbf{0.656} & 0.407          & \textbf{0.478} & 0.242          & \textbf{0.716} & 0.328 & \textbf{0.452} 
\end{tblr}
}
\end{table*}
In this section, we compare ProxSparse with three additional pruning baselines: ADMMPrune~\cite{bovza2024fast}, OWL~\cite{yin2023outlier}, and AlphaPrune~\cite{lu2024alphapruning}. The experiments are done on Mistral-v0.3-7b and Llama-2-7b model. ADMMPrune introduces a fast and effective weight update algorithm for layerwise pruning based using the Alternating Direction Method of Multipliers (ADMM).  As shown in table~\ref{tab: app_exp_table}, ProxSparse outperforms ADMMPrune in both models, achieving lower PPL (8.51 vs. 9.67) and higher acc (47.6\% vs. 45.5\%), highlighting its effectiveness. We attribute the superority of ProxSparse to its end-to-end optimization process, which goes beyond solely relying on local layer-wised information.

OWL and AlphaPrune aim to determine layer-specfic ratio to protect important layers. Here we argue that they are not the best-suited mechanism in semi-structured pruning, as the sparse operator supported by hardware typically requires all blocks to strictly adhere the pattern, making applying varying ratios hard. Nevertheless, we conduct experiments on AlphaPrune and OWL for comparison. We follow mixed sparsity proposed in OWL and AlphaPrune with Wanda, that layers can have varying ratios, while the overall ratio remains 2:4. We see ProxSparse outperforms OWL and Alphaprune on Llama-2-7b and Mistral-v0.3-7b on PPL and accuracy, showing the strength of our end-to-end optimization. Further, as pruning patterns become more fine-grained (e.g., 2:4), varying layer-wise pruning ratios become less effective as critical weights might still be removed within each block. This was reported in~\cite{yin2023outlier, lu2024alphapruning}, where 4:8 pruning performed just similarly to uniform pruning in Wanda. This highlights the benefits of ProxSparse in identifying fine-grained semi-structured masks.

\section{Evolution of 2:4 sparsity across $\lambda_{1}$ on OpenLlama-7b-v2}
In this section, we expand on the discussion from Section~\ref{sec:lambda1} and present the evolution trajectory of 2:4 sparsity across different $\lambda_{1}$ values on OpenLlama-7b-v2. Our findings on OpenLlama-7b-v2 are consistent with the main paper's discussion: a balanced regularization strength enables flexible mask exploration, preventing premature commitment while also facilitating more effective learning compared to excessively low values.
\label{app:reg_anon}
\begin{figure*}[!t]
    \centering
    \includegraphics[width=1\textwidth]{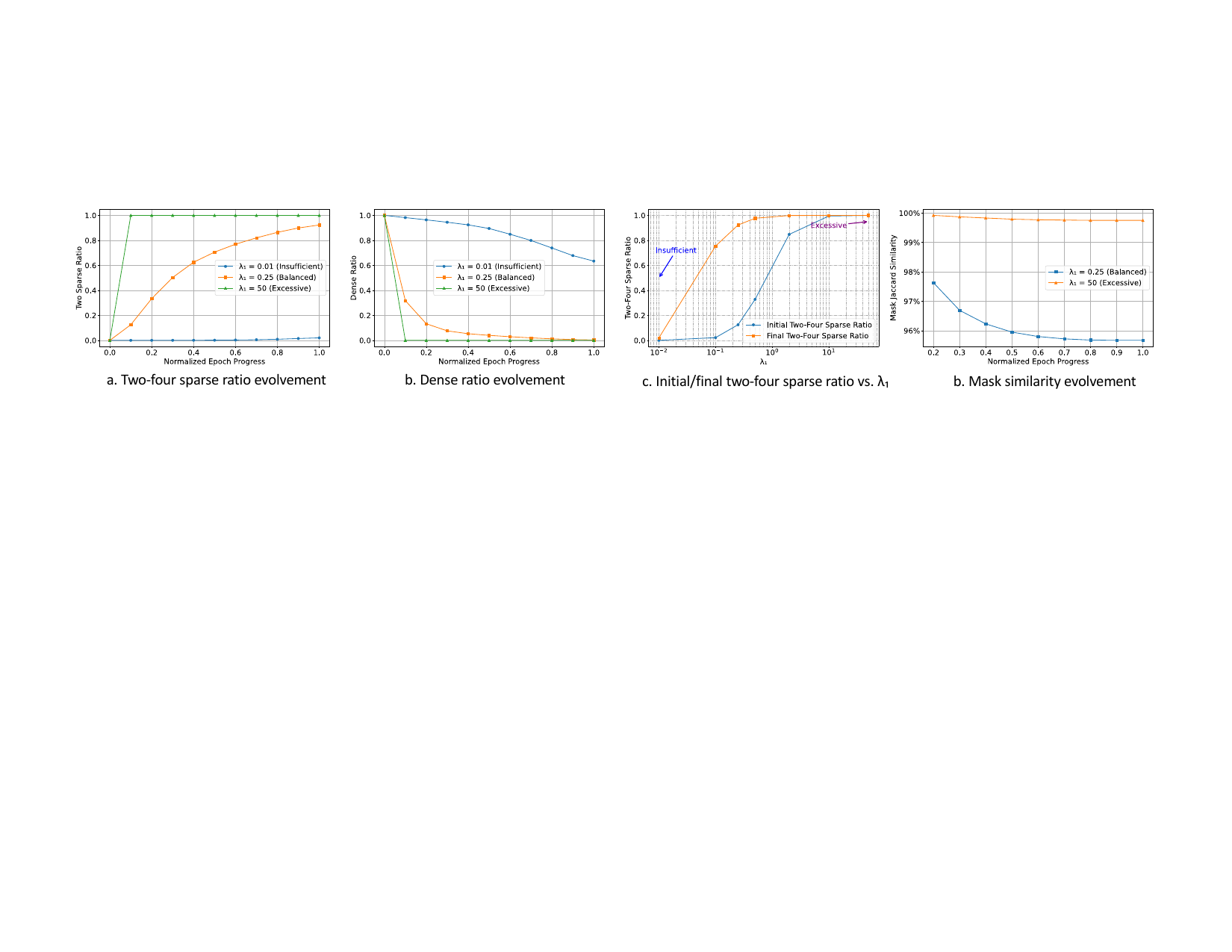} 
\caption{Evolution of sparsity ratio on OpenLlama-7b-v2 based on the degree of regularization. (a) Evolution of the 2:4 sparsity ratio over learning, where an insufficient regularization degree leads to under-learning. (b) With a larger $\lambda_{1}$ parameters shrink more quickly towards 2:4 sparsity, resulting in early commitment to a suboptimal mask. (c) Comparison of the 2:4 sparse block ratios at early (0.1 epochs) and final stages of learning. (d) Mask similarity between the final mask and the early mask obtained after 10\% of learning. An excessively large $\lambda_{1}$ results in premature mask commitment, causing mask selection to stagnate and hindering optimal mask discovery.}
    \label{fig:llama-lambda}
    \vspace{-1.5em}
\end{figure*}

\if 0
\section{Evolution of relative norm gap accross $\lambda_2$}
In this section, we show the evolution of the relative norm gap across $\lambda_{2}$ in Figure~\ref{fig:lambda2}. This gap quantifies the difference in norm between the learned model and the original model, with the mask applied. The detailed discussion of the trajectory can be found in Section~\ref{frozen}.
\label{app:reg_w0}
\begin{figure}[ht]
    \centering
    \includegraphics[width=0.4\columnwidth]{fig/lambda2/lambda2.pdf} 
    \vspace{-0.5em}
    \caption{The relative norm difference over different $\lambda_{2}$. The relative norm gap measures how closely retained weights match their original values post-training, with the semi-structured mask applied. The relative norm remained low ($\sim$20\%) with minimal change until a high lambda value was applied.}
    \label{fig:lambda2}
    \vspace{-1em}
\end{figure}
\fi 
\section{Practical scenario of 2:4 sparsity and its extensibility discussion}
\label{Discussion}
To the best of our knowledge, commercially available hardware such as Nvidia Ampere GPUs, only supports the 2:4 sparsity pattern\footnote{\href{https://www.nvidia.com/content/PDF/nvidia-ampere-ga-102-gpu-architecture-whitepaper-v2.pdf}{NVIDIA AMPERE GA102 GPU ARCHITECTURE Whitepaper}}
. Our method directly aligns with the hardware features, making it directly applicable to real-world use cases. Meanwhile, our regularizer is flexible; extending to a 1:4 sparsity pattern is straightforward, as the regularizer can be reformulated and solved with even greater efficiency. On the otherhand, semi-structured patterns like 4:8 increase regularization terms,
which could slow the solving process. Despite the longer but tolerable search time, 
inference gain remains unaffected once the optimal mask is found. A more efficient solver could further improve handling of such complex patterns, and we leave this for future exploration.

In the meantime, we note that increasing sparsity complexity (e.g., 2:4 to 4:8) will expand the search space, which is a common challenge for learning-based methods, including MaskLLM~\cite{fang2024maskllm}. Nevertheless, our regularizer supports extensibility and shows practical benefits in real-world scenarios.
\vspace{-0.5em}

\end{document}